\documentclass[sigconf, nonacm]{acmart}

\usepackage{epsfig}
\usepackage{endnotes}

\usepackage{newtxtext}
\usepackage{newtxmath}

\usepackage{amsthm}
\usepackage{amssymb}

\usepackage{graphicx}
\usepackage{url}            %
\usepackage{xcolor}         

\usepackage[labelformat=simple]{subcaption}
\usepackage{enumerate}
\usepackage{array}
\usepackage{multirow}
\usepackage{caption}
\usepackage{float}
\usepackage{color,xcolor}
\usepackage{afterpage}
\usepackage{siunitx}
\usepackage{microtype}

\usepackage{lineno}
\usepackage{booktabs}

\usepackage[linesnumbered,vlined,ruled]{algorithm2e}

\usepackage{algorithmic}

\usepackage{graphicx}

\usepackage{wasysym}
\usepackage{makecell}

\newcommand\vldbdoi{XX.XX/XXX.XX}
\newcommand\vldbpages{XXX-XXX}
\newcommand\vldbvolume{14}
\newcommand\vldbissue{1}
\newcommand\vldbyear{2026}
\newcommand\vldbauthors{\authors}
\newcommand\vldbtitle{\shorttitle} 
\newcommand\vldbavailabilityurl{URL_TO_YOUR_ARTIFACTS}
\newcommand\vldbpagestyle{plain} 

\begin{document}
\title{Differentially Private Natural Gradient Descent}

\author{Li Pan}
\affiliation{%
  \institution{Institute of Information Engineering, Chinese Academy of Sciences}
  \city{Beijing}
  \country{China}
}
\email{lipan@iie.ac.cn}

\author{Chen Kai}
\affiliation{%
  \institution{Institute of Information Engineering, Chinese Academy of Sciences}
  \city{Beijing}
  \country{China}
}
\email{chenkai@iie.ac.cn}

\author{Chang Shuai}
\affiliation{%
  \institution{Institute of Information Engineering, Chinese Academy of Sciences}
  \city{Beijing}
  \country{China}
}
\email{changshuai751x@iie.ac.cn}

\author{Zhang Sheng Zhi}
\affiliation{%
    \institution{Department of Computer Science, Metropolitan College, Boston University}
  \city{Boston}
  \country{United States}
}
\email{shengzhi@bu.edu}

\author{Lv Pei Zhuo}
\affiliation{%
  \institution{Nanyang Technological University}
  \city{Singapore}
  \country{Singapore}
}
\email{lvpeizhuo@gmail.com}

\author{He Jin Wen}
\affiliation{%
  \institution{Institute of Information Engineering, Chinese Academy of Sciences}
  \city{Beijing}
  \country{China}
}
\email{hejinwen@iie.ac.cn}
\begin{abstract}

Under a fixed privacy budget, the utility of differentially private (DP) training is ultimately determined by its optimization efficiency. Standard first-order DP optimizers such as DP-SGD rely solely on local gradients and ignore the underlying loss curvature. This geometric blindness causes severe zigzagging in ill-conditioned landscapes, squandering precious privacy budgets on inefficient iterations. Practitioners are thus trapped in a bind: either stop training prematurely or inject massive per-step noise, both of which critically compromise final model utility. Natural Gradient Descent (NGD) resolves this by preconditioning gradients with curvature, aligning updates with the loss geometry and extracting more efficient signal from every noisy step, offering a principled pathway to break the privacy-utility bottleneck.

Despite its theoretical appeal, directly integrating NGD with DP introduces fundamental challenges: curvature estimation itself consumes prohibitive privacy budgets, isotropic DP operations conflict with the anisotropic scaling of NGD, and the inverse curvature catastrophically amplify parameter updates in flat directions, causing training instability. We propose DP-NGD, a practical framework that systematically addresses these obstacles by decoupling curvature estimation from private data, reconciling isotropic DP constraints with anisotropic second-order optimization via a whitened-space mechanism, and dynamically clamping the curvature to stabilize training. Extensive experiments on standard benchmarks demonstrate that DP-NGD achieves state-of-the-art accuracy, breaking through the utility ceilings of first-order baselines while delivering up to a $10\times$ convergence speedup under the same privacy budget.

\end{abstract}

\maketitle

\pagestyle{\vldbpagestyle}
\begingroup\small\noindent\raggedright\textbf{PVLDB Reference Format:}\\
\vldbauthors. \vldbtitle. PVLDB, \vldbvolume(\vldbissue): \vldbpages, \vldbyear.\\
\href{https://doi.org/\vldbdoi}{doi:\vldbdoi}
\endgroup
\begingroup
\renewcommand\thefootnote{}\footnote{\noindent
This work is licensed under the Creative Commons BY-NC-ND 4.0 International License. Visit \url{https://creativecommons.org/licenses/by-nc-nd/4.0/} to view a copy of this license. For any use beyond those covered by this license, obtain permission by emailing \href{mailto:info@vldb.org}{info@vldb.org}. Copyright is held by the owner/author(s). Publication rights licensed to the VLDB Endowment. \\
\raggedright Proceedings of the VLDB Endowment, Vol. \vldbvolume, No. \vldbissue\ %
ISSN 2150-8097. \\
\href{https://doi.org/\vldbdoi}{doi:\vldbdoi} \\
}\addtocounter{footnote}{-1}\endgroup

\ifdefempty{\vldbavailabilityurl}{}{
\vspace{.3cm}
\begingroup\small\noindent\raggedright\textbf{PVLDB Artifact Availability:}\\
The source code, data, and/or other artifacts have been made available at 
\url{https://github.com/XXX}.
\endgroup
}

\section{Introduction}

Differential Privacy (DP)~\cite{dp1,dp2} has emerged as the gold standard for privacy-preserving machine learning. Under a fixed privacy budget, the final utility of DP training is bottlenecked by its optimization efficiency. Standard first-order DP optimizers, such as DP-SGD~\cite{dpsgd1,dpsgd2} and its adaptive variants, suffer from inherent ``geometric blindness'': they rely solely on local gradients and ignore the loss curvature. This leads to the severe zigzagging and slow convergence in highly ill-conditioned loss landscapes typical of deep networks, as illustrated in Fig.~\ref{fig:motivation_sgd}. In non-private training, slow convergence is often harmless, since one can simply run more iterations. However, in DP training, every iteration strictly consumes the privacy budget. This inefficiency squanders the precious budget on uninformative steps and traps practitioners in a bind: either stop training prematurely or inject massive per-step noise, both of which critically compromise final model utility.

Natural Gradient Descent (NGD)~\cite{natural_gradient,natural_gradient2}, a classic second-order optimization method, fundamentally resolves this geometric blindness. As depicted in Fig.~\ref{fig:motivation_ngd}, by preconditioning the gradient with a curvature matrix $F$ (e.g., the Fisher Information Matrix), NGD corrects the update direction to point more directly toward the local optimum, significantly accelerating convergence across ill-conditioned landscapes. Crucially, this acceleration directly translates to higher utility under the same privacy guarantee. Under standard DP accounting mechanisms (e.g., Rényi DP~\cite{rdp1,rdp2} or Moments Accountant~\cite{moments_account1,moments_account2}), the required per-step noise multiplier $\sigma$ scales as $\sigma \propto \sqrt{T}$ for a fixed total privacy budget. By drastically reducing the required number of iterations $T$, NGD enables substantially lower per-step noise injection, thereby elevating the signal-to-noise ratio (SNR) throughout training. This provides a mathematically principled pathway to break the longstanding privacy-utility bottleneck.

\begin{figure*}[htbp] 
    \centering
    
    \begin{subfigure}[b]{0.48\textwidth}
        \centering
        
        \includegraphics[width=\textwidth]{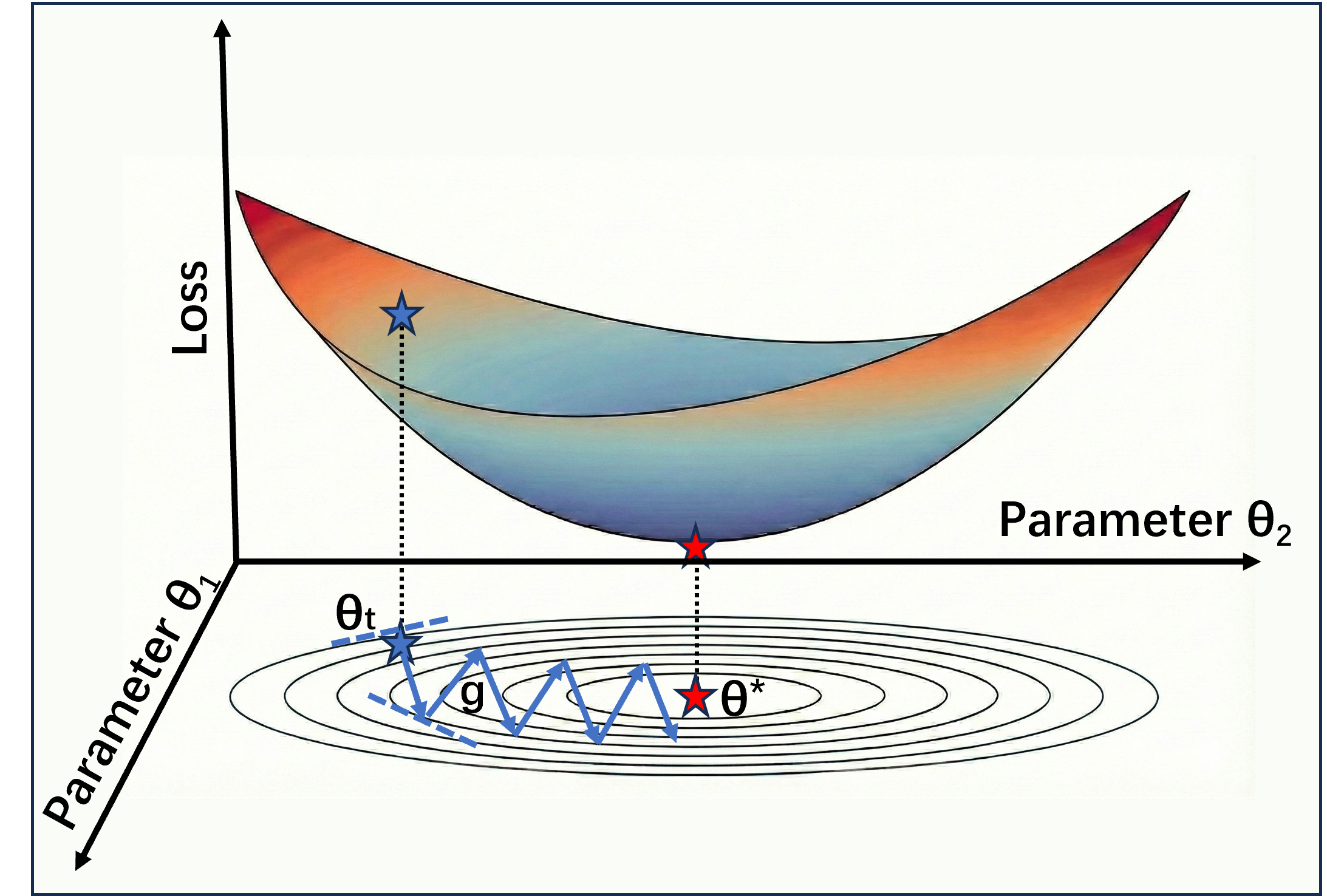}
        \caption{SGD (geometry-blind)}
        \label{fig:motivation_sgd}
    \end{subfigure}
    \hfill 
    \begin{subfigure}[b]{0.48\textwidth}
        \centering
        
        \includegraphics[width=\textwidth]{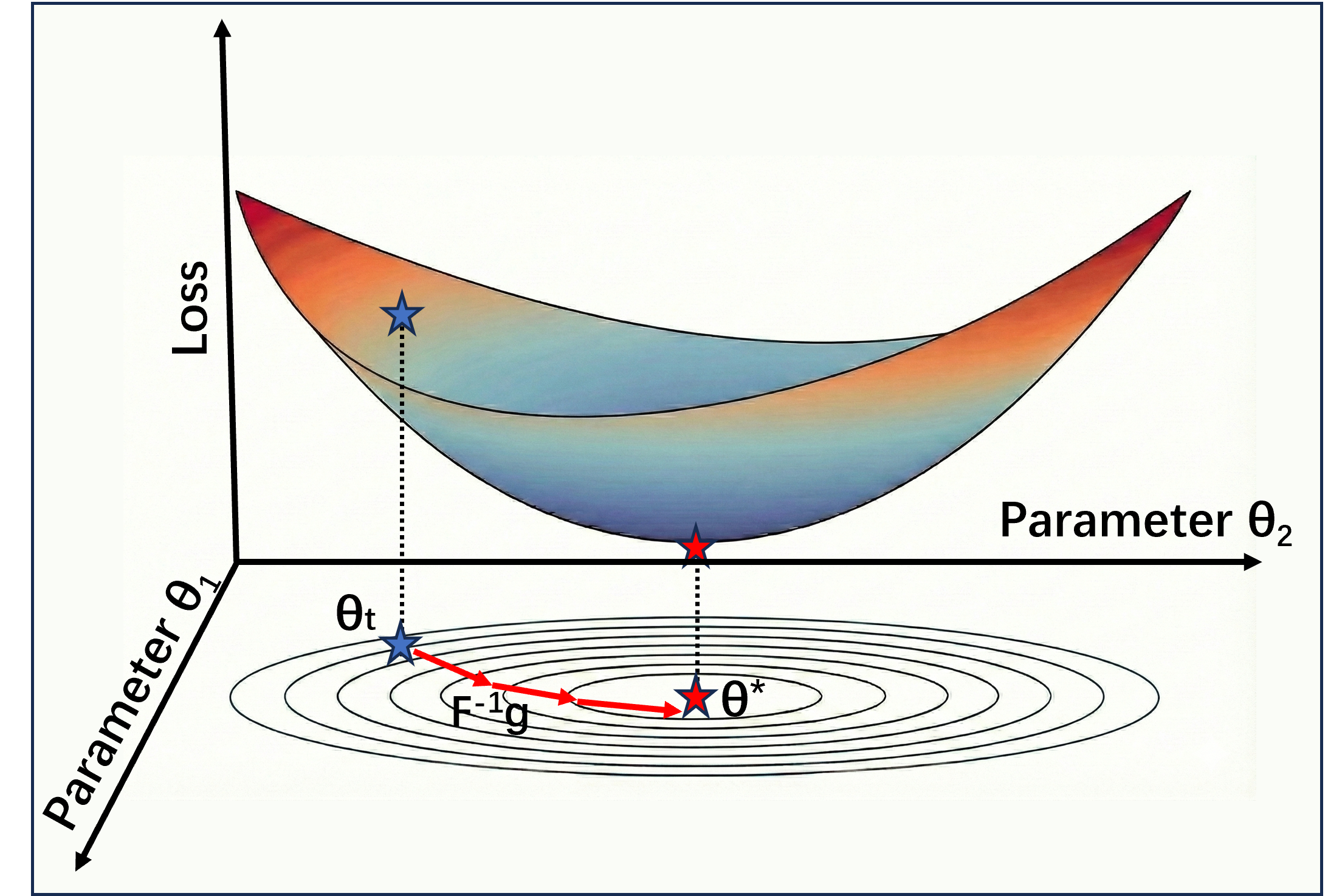}
        \caption{NGD (curvature-aware)}
        \label{fig:motivation_ngd}
    \end{subfigure}
    
    \vspace{-5pt} 
    
    \caption{Why optimization efficiency dictates differential privacy utility. Here, $\theta_t$ represents the current model parameters and $\theta^*$ is the local optimum. \textbf{(a) SGD (geometry-blind).} The gradient $g$ simply follows the orthogonal direction of the loss contours rather than pointing toward the local optimum. This blindness induces severe zigzagging and slow convergence, rapidly depleting the fixed privacy budget. \textbf{(b) NGD (curvature-aware).} Second-order methods precondition the gradient with the inverse curvature $F^{-1}$. The resulting natural gradient $F^{-1}g$ points more directly toward $\theta^*$, significantly accelerating convergence and drastically reducing inefficient steps. Because fewer iterations allow the fixed budget to be concentrated, it significantly lowers the per-step noise ($\sigma \propto \sqrt{T}$) and raises the effective signal-to-noise ratio, ultimately unlocking superior model utility.}
    \label{fig:motivation}
\end{figure*}

Despite its theoretical appeal, integrating NGD with DP is hindered by three fundamental challenges. First, standard NGD requires accurate curvature estimation from private data, which would consume a prohibitive fraction of the privacy budget in DP settings. Diverting the budget $\epsilon$ to this auxiliary computation starves the primary optimization process, largely negating any theoretical SNR gain. Second, differential privacy mandates isotropic clipping and noise injection to uniformly bound sensitivity, whereas NGD relies on anisotropic, curvature-aware scaling to achieve acceleration. Directly imposing isotropic DP constraints on natural gradients corrupts the geometric properties essential for second-order optimization. Third, in flat directions of the loss landscape where the eigenvalues of $F$ approach zero ($\lambda_i \to 0$), the inverse curvature $F^{-1}$ magnifies the noisy gradients by $1/\lambda_i \to \infty$, triggering training instability and catastrophic divergence.

To systematically resolve these bottlenecks, we propose DP-NGD, a practical and robust framework. First, we eliminate the privacy overhead of curvature estimation by computing $F$ solely on a small public auxiliary dataset to extract structural priors, thereby preserving the entire privacy budget for gradient updates. Second, we resolve the geometric incompatibility via an $F^{-1/2}$-whitened space update mechanism. We prove that performing isotropic DP operations in this whitened space mathematically translates to applying anisotropic, curvature-aware updates in the original parameter space. Crucially, this theoretical alignment reveals a profound connection---it unifies the DP sensitivity constraint with the KL trust region of NGD. Finally, to prevent parameter explosion in flat directions, we derive a dynamic clamping floor from the expected Euclidean step size of DP-SGD. By clamping the curvature eigenvalues consistently above this threshold, we effectively suppress flat-direction parameter explosion while preserving second-order acceleration in steep directions.

\vspace{5pt}
\noindent \textbf{Contributions.} Our main contributions are summarized as follows:

\vspace{2pt}
\noindent $\bullet$ \textbf{Privacy-Free Curvature Acceleration.} Our key insight is that DP noise disproportionately corrupts fine-grained gradient components, making DP training inherently tolerant of curvature estimation errors in these directions. Consequently, coarse-grained structural priors suffice for effective second-order acceleration. We realize this through a decoupled DP-NGD framework that estimates the curvature entirely on a small public auxiliary dataset, incurring zero additional privacy cost over standard DP-SGD.

\vspace{2pt}
\noindent $\bullet$ \textbf{KL-DP Duality.} We prove that applying isotropic DP operations within an $F^{-1/2}$-whitened space mathematically translates to curvature-aligned anisotropic updates in the original parameter space. This establishes a rigorous KL-DP Duality that unifies the DP sensitivity constraint with the KL trust region of NGD. To ensure stable training, we further introduce a dynamic clamping mechanism that effectively prevents parameter explosion in flat directions.

\vspace{2pt}
\noindent $\bullet$ \textbf{Superior Efficiency and Utility.} Across multiple benchmarks, DP-NGD consistently achieves state-of-the-art accuracy while dramatically accelerating convergence. For instance, it breaks through the utility ceilings of baselines with up to a $10\times$ convergence speedup (e.g., reaching the peak accuracy of DP-SGD using only $\text{42.8}\%$ of the iterations on CIFAR-10 under $\epsilon=\text{1.0}$). DP-NGD offers a highly practical and robust second-order accelerator for privacy-preserving deep learning.

\section{Preliminaries}
\label{sec: preliminary}

\subsection{Differential Privacy \& DP-SGD}

\textit{Differential Privacy.} Differential Privacy (DP)~\cite{Dong_dp} provides a rigorous mathematical framework for privacy-preserving computation. A randomized algorithm $\mathcal{M}$ satisfies $(\epsilon, \delta)$-DP if, for any two adjacent datasets $D$ and $D'$ differing by at most one example, and for any subset of outputs $S \subseteq \text{Range}(\mathcal{M})$, it holds that:
\begin{equation}
    \Pr[\mathcal{M}(D) \in S] \le e^\epsilon \Pr[\mathcal{M}(D') \in S] + \delta.
\end{equation}

\textit{DP-SGD \& Isotropic Operations.} The standard approach to training deep networks under DP is DP-SGD. At each iteration, it computes the per-sample gradient $g_i$ and clips it to a maximum $L_2$-norm bound $C$, i.e., $\bar{g}_i = g_i \cdot \min(1, \frac{C}{\|g_i\|_2})$. Then, it aggregates the clipped gradients and injects Gaussian noise to ensure privacy:
\begin{equation}
    \tilde{g} = \frac{1}{|\mathcal{B}|} \left( \sum_{i \in \mathcal{B}} \bar{g}_i + \mathcal{N}(0, \sigma^2 C^2 \mathbf{I}) \right),
\end{equation}
where $\mathcal{B}$ is the mini-batch and $\sigma$ is the noise multiplier. Crucially, both the $L_2$-norm clipping boundary and the spherical Gaussian noise covariance ($\sigma^2 C^2 \mathbf{I}$) are strictly isotropic across the entire parameter space.

\textit{Privacy Accounting.} To rigorously track the privacy loss over $T$ iterations, modern implementations typically rely on advanced privacy accounting methods\cite{privacy_account1}, such as the Moments Accountant~\cite{moments_account1} or Rényi Differential Privacy (RDP)~\cite{rdp1,rdp2}. Under standard composition theorems, for a fixed overall privacy budget $(\epsilon, \delta)$ and sampling rate $q = |\mathcal{B}|/N$, the required per-step noise multiplier $\sigma$ asymptotically scales as:
\begin{equation}
   \sigma \propto \frac{q \sqrt{T \log(1/\delta)}}{\epsilon}.
\end{equation}
This establishes a fundamental property: when other privacy parameters are fixed, the injected noise scales with the square root of the total iterations ($\sigma \propto \sqrt{T}$). Consequently, accelerating convergence to reduce $T$ provides a mathematically principled pathway to decrease per-step noise and improve model utility.

\subsection{Natural Gradient Descent \& K-FAC}

\textit{KL Trust Region.} Natural Gradient Descent (NGD) approaches optimization from an information-geometric perspective. Instead of updating parameters in the Euclidean space, NGD minimizes the objective $\mathcal{L}$ within a local KL-divergence trust region~\cite{kl_trust_region1,kl_trust_region2} to maintain distributional stability. The KL-divergence between the predictive distributions of the current parameters $\theta$ and the updated parameters $\theta + \Delta \theta$ can be approximated via its second-order Taylor expansion:
\begin{equation}
    \text{KL}(P_{\theta} \| P_{\theta + \Delta \theta}) \approx \frac{1}{2} \Delta \theta^T F \Delta \theta,
\end{equation}
where $F = \mathbb{E}_{x, y \sim P_{\theta}}[\nabla_{\theta} \log P_{\theta}(y|x) \nabla_{\theta} \log P_{\theta}(y|x)^T]$ is the Fisher Information Matrix (FIM), which captures the local loss curvature. Solving the trust-region optimization yields the NGD update rule:
\begin{equation}
    \Delta \theta = - \eta F^{-1} \nabla_{\theta} \mathcal{L},
\end{equation}
where $\eta$ is the learning rate and $g \triangleq \nabla_{\theta} \mathcal{L}$ denotes the first-order gradient of the loss. In this context, $F^{-1} g$ is formally defined as the natural gradient. Unlike standard first-order methods, preconditioning the gradients with inverse curvature $F^{-1}$ enforces an anisotropic scaling: it aggressively accelerates progress along flat directions while damping updates in steep directions.

\textit{K-FAC Approximation.} In deep neural networks, computing and inverting the exact FIM~\cite{fim1,fim2} is computationally intractable due to the massive parameter dimensionality. The Kronecker-Factored Approximate Curvature (K-FAC)~\cite{kfac2,kfac3} efficiently approximates $F$ by assuming independence between the activations $a_l$ and the pre-activation gradients $g_l$ of each layer $l$. This decouples the layer-wise FIM block $F_l$ into a Kronecker product of two smaller matrices: $F_l \approx A_l \otimes G_l$, where $A_l = \mathbb{E}[a_l a_l^T]$ and $G_l = \mathbb{E}[g_l g_l^T]$. This structural factorization compresses the curvature degrees of freedom from $\mathcal{O}(d^2)$ to $\mathcal{O}(d_{\text{in}}^2 + d_{\text{out}}^2)$, where $d_{\text{in}}, d_{\text{out}} \ll d$, drastically reducing the requisite sample complexity. This provides a mathematical justification for why a limited public dataset suffices to extract reliable coarse-grained geometric priors.

\section{The DP-NGD Framework}
\label{sec:method}

To address the three fundamental challenges that hinder the integration of NGD and DP, we propose DP-NGD, a practical and robust framework built upon three mutually reinforcing components. We introduce each component in the following sections, and finally unify them into a complete training algorithm.

\subsection{Privacy-Free Coarse-Grained Curvature Estimation}
\label{subsec:privacy_free_curvature}

Standard natural gradient descent (NGD) estimates the curvature matrix directly from the training data. In a DP setting, doing so would consume a substantial portion of the privacy budget merely for preconditioning, leaving little budget for the actual gradient updates that drive learning. We avoid this prohibitive overhead through strict computational decoupling: curvature estimation is performed exclusively on a public auxiliary dataset ($\mathcal{D}_{\text{pub}}$), while gradient clipping and DP noise injection are confined entirely to the private training set ($\mathcal{D}_{\text{priv}}$). Because $\mathcal{D}_{\text{pub}}$ is \textit{a priori} public and disjoint from $\mathcal{D}_{\text{priv}}$—lying strictly outside the adjacency relation defining DP—this computation incurs zero privacy cost. Consequently, the entire privacy budget $\epsilon$ is preserved for gradient updates.

A natural question arises: why does an out-of-distribution (OOD) public dataset suffice for effective curvature estimation? We identify a fundamental synergy between DP and NGD, termed \textit{Coarse-Grained Curvature Transferability}. The cornerstone of this synergy is that DP noise disproportionately corrupts fine-grained gradient components. Specifically, in the flat directions of the loss landscape, the true gradient signals are typically small and are therefore dominated by the injected DP noise. Preconditioning these fine-grained directions with inaccurate curvature merely maps random noise into random noise, incurring minimal utility loss. Consequently, DP training is inherently tolerant of curvature estimation errors in these fine-grained dimensions: it demands structural fidelity exclusively in the dominant, coarse-grained eigenspaces to achieve effective second-order acceleration.

Fortunately, this requirement for coarse-grained fidelity is naturally satisfied in our optimization paradigm. First, a shared model architecture imposes strong inductive priors~\cite{inductive_prior1,inductive_prior2}, yielding covariance matrices that are structurally homomorphic across datasets. Second, natural images share universal low-level statistics~\cite{low_level_statistic1,low_level_statistic2}—such as edges, textures, and color smoothness—that dominate the top eigenspaces of the curvature matrix, allowing even OOD data to reliably capture the macroscopically steep directions. Third, the K-FAC approximation acts as a low-pass filter: its expectation over samples effectively averages out fine-grained fluctuations, isolating precisely the statistically stable, coarse-grained structures we rely on. Together, these properties ensure that the coarse-grained curvature priors essential for acceleration can be effectively extracted even from OOD public auxiliary dataset~\cite{dp_public_data1,dp_public_data2}.

Specifically, we precompute the curvature matrix $F$ on $\mathcal{D}_{\text{pub}}$ via the K-FAC approximation. For each layer $l$, the diagonal block $F_l$ is constructed as the Kronecker product of the activation covariance $A_l$ and the pre-activation gradient covariance $G_l$:
\begin{equation}
    F_l = A_l \otimes G_l, \quad A_l = \mathbb{E}_{\mathcal{D}_{\text{pub}}}[a_l a_l^\top], \quad G_l = \mathbb{E}_{\mathcal{D}_{\text{pub}}}[g_l g_l^\top].
\end{equation}
The complete curvature is then assembled as a block-diagonal matrix across all layers: $F = \operatorname{diag}(F_1, F_2, \ldots, F_L)$. Notably, our method requires only a small public auxiliary set for effective curvature estimation (e.g., $\sim 1\%$ of the private training data size; detailed in Section~\ref{sec:sample efficiency}). This extreme sample efficiency stems from two factors. First, K-FAC drastically reduces the requisite sample complexity from $\mathcal{O}(d)$ to the order of a single layer's width. Second, as analyzed above, our framework primarily relies on the dominant eigenspaces and naturally tolerates fine-grained estimation inaccuracies, further relaxing the sample requirement.

\subsection{Reconciling Isotropic Privacy with Anisotropic Optimization}
\label{sec:whitened space update}

Directly imposing isotropic DP constraints on anisotropic natural gradients corrupts the geometric properties essential for second-order optimization. To resolve this conflict, we introduce a Whitened-Space Update Mechanism. By shifting all DP operations into an $F^{-1/2}$-whitened space, we prove that the isotropic DP mechanism translates into anisotropic, curvature-aligned parameter updates in the original space---thus fully preserving the acceleration of NGD under DP constraints. Crucially, the whitening matrix $F^{-1/2}$ is not chosen arbitrarily: it emerges directly from aligning the DP $L_2$-norm constraint with the KL trust region, a derivation we provide in Appendix~\ref{app:whitening_derivation} for completeness.

Algorithm~\ref{alg:whitened_update} formalizes the Whitened-Space Update Mechanism. For each sample in the Poisson-sampled batch, the raw gradient is first projected into the whitened space via $F^{-1/2}$ (Line~2). This geometric transformation creates an isotropic space with respect to the geometry induced by $F$. Within this whitened space, standard $L_2$-norm clipping can be safely applied without distorting the relative scale of curvature directions (Line~3). Following clipping, the gradients are aggregated and perturbed with isotropic Gaussian noise (Lines~5--6). This procedure is identical to standard DP-SGD, except that it operates on the whitened gradients $\hat{g}_i$ rather than the raw gradients $g_i$. Finally, the sanitized gradient is projected back to the original parameter space (Line~7). Since the whitening matrix $F^{-1/2}$ is precomputed entirely on public data, it acts as a linear transformation strictly independent of the sensitive data. By the post-processing property of DP~\cite{post-processing1,post-processing2}, this inverse projection does not incur any additional privacy cost. Consequently, the privacy accounting remains identical to DP-SGD. A formal privacy analysis is provided in Section~\ref{sec:privacy analysis}.

Having established the privacy guarantees of the whitened-space mechanism, we now formally prove that the resulting parameter update is anisotropic and curvature-aligned, successfully recovering the natural gradient in expectation.

\begin{algorithm}[tb]
\caption{Whitened-Space Update Mechanism}
\label{alg:whitened_update}
\begin{algorithmic}[1]
\REQUIRE Poisson-sampled batch $\mathcal{B}$, per-sample gradients $\{g_i\}_{x_i \in \mathcal{B}}$, whitening matrix $F^{-1/2}$, clipping threshold $C$, noise multiplier $\sigma$, expected batchsize $B$, learning rate $\eta$.
\ENSURE Parameter update  $\Delta \theta$
\FOR{each sample $x_i \in \mathcal{B}$}
    \STATE $\hat{g}_i \leftarrow F^{-1/2} g_i$ \COMMENT{Project to the whitened space}
    \STATE $\bar{g}_i \leftarrow \hat{g}_i \cdot \min\left(1, \frac{C}{\|\hat{g}_i\|_2}\right)$ \COMMENT{Isotropic $L_2$ clipping}
\ENDFOR
\STATE $\xi \sim \mathcal{N}\left(0, \sigma^2 C^2 \mathbf{I}\right)$ \COMMENT{Sample isotropic Gaussian noise}
\STATE $\tilde{g} \leftarrow \frac{1}{B} \big( \sum_{i \in \mathcal{B}} \bar{g}_i + \xi \big) $ \COMMENT{Aggregate and inject noise}
\STATE $\Delta \theta \leftarrow -\eta F^{-1/2} \tilde{g}$ \COMMENT{Project back to original space}
\RETURN $\Delta \theta$
\end{algorithmic}
\end{algorithm}

\begin{theorem}[Curvature Alignment of Whitened-Space Updates]
\label{theorem: curvature_alignment}
Let $\Delta \theta$ be the parameter update generated by Algorithm~\ref{alg:whitened_update}, and let $F$ be the estimated curvature matrix. For any sample $i$, define its adaptive clipping scalar as $c_i = \min\left(1, \frac{C}{\|F^{-1/2} g_i\|_2}\right)$. In expectation over the isotropic Gaussian noise $\xi \sim \mathcal{N}(0, \sigma^2 C^2 \mathbf{I})$, the parameter update recovers the adaptively clipped natural gradient:
$$ \mathbb{E}_{\xi}[\Delta \theta] = -\frac{\eta}{B} \sum_{i \in \mathcal{B}} c_i F^{-1} g_i $$
Furthermore, the injected DP noise is transformed into an anisotropic, curvature-aligned distribution in the original parameter space, with a covariance matrix proportional to $F^{-1}$:
$$ \text{Cov}(\Delta \theta) = \frac{\eta^2 \sigma^2 C^2}{B^2} F^{-1} $$
\end{theorem}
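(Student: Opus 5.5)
The plan is a direct computation. Throughout I condition on the sampled batch $\mathcal{B}$ and the per-sample gradients $\{g_i\}$, so the only randomness is the Gaussian noise $\xi$. I will assume $F$ is symmetric positive definite. This is automatic once damping or clamping is applied; the K-FAC factors are positive semidefinite covariances, so $F_l = A_l\otimes G_l$ is too. Under this assumption $F^{-1/2}$ is the unique symmetric positive definite square root of $F^{-1}$, so
\[
F^{-1/2}F^{-1/2}=F^{-1}, \qquad \bigl(F^{-1/2}\bigr)^\top = F^{-1/2}.
\]
These two identities carry the whole argument.

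\textbf{Unrolling the update.} Lines 2, 3, 6 and 7 of Algorithm~\ref{alg:whitened_update} combine into a closed form. Line 3 gives $\bar g_i = c_i \hat g_i = c_i F^{-1/2} g_i$, where $c_i=\min\bigl(1, C/\|F^{-1/2}g_i\|_2\bigr)$ is exactly the scalar in the statement. Substituting into Lines 6 and 7 gives
\[
\Delta\theta = -\frac{\eta}{B}\sum_{i\in\mathcal{B}} c_i F^{-1/2}F^{-1/2} g_i \;-\; \frac{\eta}{B}F^{-1/2}\xi
= -\frac{\eta}{B}\sum_{i\in\mathcal{B}} c_i F^{-1} g_i \;-\; \frac{\eta}{B}F^{-1/2}\xi .
\]
The first term is deterministic given the batch; in particular the $c_i$ do not depend on $\xi$.

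\textbf{Mean.} By linearity of expectation and $\mathbb{E}[\xi]=0$, the noise term vanishes in expectation. What remains is $\mathbb{E}_\xi[\Delta\theta] = -\frac{\eta}{B}\sum_i c_i F^{-1}g_i$, which is the first claim.

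\textbf{Covariance.} The deterministic term does not affect the covariance, so
\[
\mathrm{Cov}(\Delta\theta) = \frac{\eta^2}{B^2}\, F^{-1/2}\,\mathrm{Cov}(\xi)\,\bigl(F^{-1/2}\bigr)^\top = \frac{\eta^2\sigma^2C^2}{B^2}\, F^{-1/2}F^{-1/2}.
\]
By the square-root identity this equals $\frac{\eta^2\sigma^2C^2}{B^2}F^{-1}$. The noise $-\frac{\eta}{B}F^{-1/2}\xi$ is a linear image of a Gaussian, so it is Gaussian with exactly this covariance. Its level sets are therefore ellipsoids whose axes are the eigenvectors of $F$, with variance $\propto 1/\lambda_i$ along each. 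This makes precise the phrase ``anisotropic, curvature-aligned'' in the statement.

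\textbf{Main obstacle.} The only step needing care is interpretation rather than computation: the statement must be read as conditional on $\mathcal{B}$ and the $g_i$. If $\mathrm{Cov}$ were also taken over Poisson sampling, it would pick up extra sampling-variance terms that the formula does not include. I will state this conditioning explicitly. I will also note that the block-diagonal structure of $F$ lets the square root be taken layerwise, via $(A_l\otimes G_l)^{-1/2}=A_l^{-1/2}\otimes G_l^{-1/2}$, so the symmetric-square-root assumption holds for the K-FAC estimate as constructed.
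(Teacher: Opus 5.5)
Your proposal is correct and follows the paper's proof essentially step for step. Both unroll the update into the deterministic clipped natural-gradient term plus $-\frac{\eta}{B}F^{-1/2}\xi$, drop the noise in expectation, and compute the covariance as $\mathrm{Cov}(AX)=A\,\mathrm{Cov}(X)A^\top$ using the symmetry of $F^{-1/2}$. Your explicit conditioning on $\mathcal{B}$ and $\{g_i\}$, and your positive-definiteness assumption (which the clamping guarantees), state precisely what the paper leaves implicit.
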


\begin{proof}
Based on Algorithm~\ref{alg:whitened_update}, the final parameter update $\Delta \theta$ is a linear combination of the gradient signal and the injected noise. Its complete algebraic expression is formulated as:
\begin{equation}
\label{eq:full_update}
\Delta \theta = -\eta F^{-1/2} \tilde{g} = -\frac{\eta}{B} \sum_{i \in \mathcal{B}} F^{-1/2} \bar{g}_i - \frac{\eta}{B} F^{-1/2} \xi
\end{equation}
\noindent where $\bar{g}_i = \hat{g}_i \min(1, \frac{C}{\|\hat{g}_i\|_2})$ is the clipped gradient in the whitened space, and $\xi \sim \mathcal{N}(0, \sigma^2 C^2 \mathbf{I})$ is the isotropic Gaussian noise. We establish the curvature alignment of the final parameter update by analyzing its expectation and covariance, respectively.

\paragraph{Part I: Parameter Update Expectation.}
For each sample $i$, we denote its adaptive clipping factor as $c_i = \min\left(1, \frac{C}{\|\hat{g}_i\|_2}\right)$. Since $c_i$ is a scalar, it commutes with matrix multiplication. Together with the fact that the injected DP noise is zero-mean ($\mathbb{E}_{\xi}[\xi] = 0$), the stochastic noise term in Eq.~\eqref{eq:full_update} vanishes. Taking the expectation over $\xi$, we obtain:
\begin{equation}
\mathbb{E}_{\xi}[\Delta \theta] = -\frac{\eta}{B} \sum_{i \in \mathcal{B}} F^{-1/2} \left( c_i F^{-1/2} g_i \right) = -\frac{\eta}{B} \sum_{i \in \mathcal{B}} c_i F^{-1} g_i
\end{equation}
Because the clipping factor $c_i$ exclusively rescales the magnitude without altering the vector's orientation, this expectation recovers the adaptively clipped natural gradient $F^{-1}g_i$.

\paragraph{Part II: Parameter Update Covariance.}
Next, we analyze the covariance matrix of the update $\Delta \theta$. Since the first term in Eq.~\eqref{eq:full_update} is deterministic with respect to the DP mechanism, it contributes nothing to the variance. The covariance is entirely determined by the linear transformation of the Gaussian noise term $\xi$. Using the property of covariance under affine transformations, $\text{Cov}(AX) = A \text{Cov}(X) A^T$, we derive:
\begin{align}
\text{Cov}(\Delta \theta) &= \text{Cov}\left( -\frac{\eta}{B} F^{-1/2} \xi \right) \nonumber \\
&= \left(-\frac{\eta}{B} F^{-1/2}\right) \text{Cov}(\xi) \left(-\frac{\eta}{B} F^{-1/2}\right)^T \nonumber \\
&= \frac{\eta^2}{B^2} F^{-1/2} (\sigma^2 C^2 \mathbf{I}) F^{-1/2} \nonumber \\
&= \frac{\eta^2 \sigma^2 C^2}{B^2} F^{-1}
\end{align}
Here we utilize the symmetry of $F^{-1/2}$, a property intrinsically inherited from the Fisher information matrix. This demonstrates that the originally isotropic noise is transformed into an anisotropic distribution whose covariance is precisely shaped by the inverse curvature $F^{-1}$, thereby aligning with the natural gradient.

Combining Part I and Part II, the whitened-Space mechanism preserves the curvature-aligned geometry for both the gradient signal and the DP perturbation, thereby completing the proof.
\end{proof}

Together with the privacy guarantee established, Theorem~\ref{theorem: curvature_alignment} confirms that the Whitened-Space Update Mechanism resolves the inherent geometric conflict between DP and NGD. By achieving this theoretical alignment, our method strictly satisfies the differential privacy constraints while preserving the crucial acceleration properties of second-order optimization.

\vspace{3pt}
\noindent \textbf{The KL-DP Duality.} 
Theorem~\ref{theorem: curvature_alignment} reveals a deeper structural connection within our method: the DP sensitivity bound $C$ simultaneously defines the radius of the KL trust region for NGD.

To avoid destabilizing updates caused by pathological curvature, NGD controls the per-step distributional shift by bounding the KL divergence. Under the standard small-step assumption of NGD, higher-order terms are negligible and the KL divergence is locally governed by
\begin{equation}
    D_{\text{KL}}(P_\theta \parallel P_{\theta+\Delta\theta}) = \frac{1}{2} \Delta\theta^\top F \Delta\theta.
\end{equation}
Consider the parameter update contributed by a single clipped sample in our whitened-space mechanism: $\Delta\theta_i = -\eta F^{-1/2} \bar{g}_i$. Substituting this into the above KL divergence yields:
\begin{align}
D_{\text{KL}} &= \frac{1}{2} \left( -\eta F^{-1/2} \bar{g}_i \right)^\top F \left( -\eta F^{-1/2} \bar{g}_i \right) \nonumber \\
&= \frac{\eta^2}{2} \bar{g}_i^\top \left( F^{-1/2} F F^{-1/2} \right) \bar{g}_i \nonumber \\
&= \frac{\eta^2}{2} \|\bar{g}_i\|_2^2.
\end{align}
Since DP clipping in the whitened space enforces $\|\bar{g}_i\|_2 \le C$, we immediately obtain
\begin{equation}
    \|\bar{g}_i\|_2 \le C \quad \Longrightarrow \quad D_{\text{KL}} \le \frac{\eta^2}{2} C^2.
\end{equation}
This implies that the scalar $C$ plays a dual role. As a privacy safeguard, it bounds the maximal contribution of any individual sample, rigorously ensuring differential privacy. As an optimization boundary, it explicitly limits the distributional shift that any single data point can induce, matching the trust-region constraint required for stable NGD convergence. This marks a fundamental departure from standard DP-SGD, where the clipping threshold is an arbitrary, isotropic bound imposed without regard to the geometry of the loss landscape. In our framework, the DP sensitivity constraint is instead defined in a principled, curvature-aware manner derived directly from second-order optimization theory.

The underlying mechanism behind this duality is the geometric equivalence established by the whitening transformation:
\begin{equation}
    \|\hat{g}_i\|_2 = \|F^{-1/2} g_i\|_2 = \sqrt{g_i^\top F^{-1} g_i} = \|g_i\|_{F^{-1}}.
\end{equation}
Geometrically, this equivalence explains how our mechanism natively adapts to the local loss landscape. Bounding the $L_2$-norm in the whitened space ($\|\hat{g}_i\|_2 \le C$) is mathematically equivalent to bounding the curvature-aware Mahalanobis norm in the parameter space ($\|g_i\|_{F^{-1}} \le C$). Consequently, steep directions (associated with large eigenvalues of $F$) are inherently suppressed to prevent divergence, while informative components along flat directions (small eigenvalues) are faithfully preserved.

\subsection{Dynamic Curvature Clamping via Euclidean Step Equivalence}

Although DP clipping bounds the gradient's $L_2$-norm in the whitened space, the inverse projection via $F^{-1/2}$ leaves the Euclidean step size unbounded in the original parameter space. In flat directions where the curvature eigenvalues approach zero ($\lambda_i \to 0$), parameter updates are magnified by a factor of $1/\sqrt{\lambda_i} \to \infty$, causing severe training instability. Crucially, this parameter explosion can occur without violating the KL trust region: flat directions permit large parameter moves with negligible distributional shift~\cite{flat_direction1,flat_direction2}, making the KL constraint structurally blind to such instability. An explicit Euclidean safeguard on the parameter update is therefore necessary.

\paragraph{Deriving the Euclidean Safety Bound.}
To establish this safeguard, we adopt standard DP-SGD as a trusted baseline. DP-SGD maintains well-controlled expected Euclidean displacements throughout training. By enforcing that the expected displacement of our DP-NGD update does not exceed that provable upper bound, we obtain a rigorous safety floor for robust convergence. 

In standard DP-SGD, the parameter update is formulated as $\Delta \theta_{\text{sgd}} = -\eta_{\text{sgd}}\big(\bar{g}_{\text{sgd}} + \frac{1}{B}\xi_{\text{sgd}}\big)$, where the gradient is clipped to $\|\bar{g}_{\text{sgd}}\|_2 \le C_{\text{sgd}}$ and the noise is $\xi_{\text{sgd}} \sim \mathcal{N}(0, \sigma^2 C_{\text{sgd}}^2 \mathbf{I})$. Because the DP noise is zero-mean and independent of the gradient, the cross term $\mathbb{E}[\bar{g}_{\text{sgd}}^\top \xi_{\text{sgd}}]$ vanishes. 
Substituting $\mathbb{E}[\|\xi_{\text{sgd}}\|_2^2] = d\sigma^2 C_{\text{sgd}}^2$ for the $d$-dimensional Gaussian noise, the expected squared Euclidean step size simplifies to
\begin{align}
    \mathbb{E}[\|\Delta \theta_{\text{sgd}}\|_2^2] 
    &= \eta_{\text{sgd}}^2 \Bigl( \|\bar{g}_{\text{sgd}}\|_2^2 + \frac{1}{B^2} \mathbb{E}[\|\xi_{\text{sgd}}\|_2^2] \Bigr) \nonumber \\
    &\le \eta_{\text{sgd}}^2 C_{\text{sgd}}^2 \Bigl(1 + \frac{d\sigma^2}{B^2}\Bigr).
\end{align}

For our DP-NGD framework, the parameter update is formulated as $\Delta \theta_{\text{ngd}} = -\eta_{\text{ngd}} F^{-1/2} \big(\bar{g}_{\text{ngd}} + \frac{1}{B} \xi_{\text{ngd}}\big)$. Taking the expectation and applying the Rayleigh quotient bound for symmetric matrices ($v^\top F^{-1} v \le \frac{1}{\lambda_{\text{min}}} \|v\|_2^2$), we obtain the upper bound
\begin{align}
    \mathbb{E}[\|\Delta \theta_{\text{ngd}}\|_2^2] 
    &= \eta_{\text{ngd}}^2 \Bigl( \bar{g}_{\text{ngd}}^\top F^{-1} \bar{g}_{\text{ngd}} + \frac{1}{B^2} \mathbb{E}\bigl[ \xi_{\text{ngd}}^\top F^{-1} \xi_{\text{ngd}} \bigr] \Bigr) \nonumber \\
    &\le \frac{\eta_{\text{ngd}}^2}{\lambda_{\text{min}}} C_{\text{ngd}}^2 \Bigl(1 + \frac{d\sigma^2}{B^2}\Bigr).
\end{align}

Assuming matched privacy configurations (identical batch size $B$ and noise multiplier $\sigma$), enforcing $\mathbb{E}[\|\Delta \theta_{\text{ngd}}\|_2^2] \le \mathbb{E}[\|\Delta \theta_{\text{sgd}}\|_2^2]$ allows the common factor $(1 + d\sigma^2/B^2)$ to cancel out. Solving for the minimum eigenvalue yields the safety bound $\lambda_{\text{safe}}$:
\begin{equation}
\label{equation:lambda_safe}
   \lambda_{\text{min}} \ge \left( \frac{\eta_{\text{ngd}} C_{\text{ngd}}}{\eta_{\text{sgd}} C_{\text{sgd}}} \right)^2 \triangleq \lambda_{\text{safe}}.
\end{equation}

\noindent This inequality reveals that as long as every eigenvalue satisfies $\lambda_i \ge \lambda_{\text{safe}}$, the expected Euclidean norm of the DP-NGD update is bounded by that of the DP-SGD baseline. To enforce this guarantee throughout training, each time the curvature matrix $F$ is estimated, we dynamically clamp its eigenvalues to this safety floor:
\begin{equation}
   \tilde{\lambda}_i = \max(\lambda_i, \lambda_{\text{safe}}).
\end{equation}
Preconditioning the gradient with this clamped curvature ensures optimization stability in the Euclidean space. Unlike standard NGD, which routinely adds a damping term $(F + \lambda I)^{-1}$ that indiscriminately perturbs all eigen-directions and sacrifices curvature accuracy in steep directions, our clamping operation is highly selective. It leaves well-conditioned, steep directions ($\lambda_i \ge \lambda_{\text{safe}}$) completely untouched, fully preserving their second-order acceleration, while safely lifting dangerously flat directions ($\lambda_i < \lambda_{\text{safe}}$) to the safety floor.

\paragraph{The Dynamic Clamping Schedule.}
Enforcing $\lambda_{\text{safe}}$ as a fixed clamping floor effectively suppresses parameter explosion, but is unnecessarily conservative during the high-SNR mid-training phase. The theoretical bound assumes a worst-case scenario where all eigenvalues are clamped to $\lambda_{\text{safe}}$. In practice, however, most eigen-directions remain steep ($\lambda_i \gg \lambda_{\text{safe}}$) during this mid-phase, and the actual Euclidean displacement is far below the safety limit. A static floor ignores this available margin, needlessly sacrificing curvature accuracy and suppressing second-order acceleration when Euclidean safety is already guaranteed.

To recover this lost efficiency during mid-training without compromising robust convergence in late-stage flat basins, we introduce a dynamic clamping schedule over a total of $T$ iterations:
\begin{equation}
\label{equation:dynamic_schedule}
\lambda_t = 
\begin{cases} 
\lambda_{\text{safe}} - (\lambda_{\text{safe}} - \lambda_{\text{base}}) \cdot \frac{t}{T_1}, & \text{if } 0 \le t < T_1 \\[4pt]
\lambda_{\text{base}} + (\lambda_{\text{safe}} - \lambda_{\text{base}}) \cdot \left( \frac{t - T_1}{T - T_1} \right)^p, & \text{if } T_1 \le t \le T 
\end{cases}
\end{equation}
where $T_1$ is a brief warmup period, $\lambda_{\text{base}} < \lambda_{\text{safe}}$ is a standard numerical stabilizer adopted from classical NGD, and $p > 1$ controls the steepness of the polynomial transition. Both $T_1$ (as a fraction of $T$) and $p$ are insensitive within practical ranges; a detailed sensitivity analysis is provided in Section~\ref{sec:hyper-parameter-robustness}.

This formulation naturally yields three distinct phases. In the initial warmup ($0 \le t < T_1$), curvature estimates are highly unreliable due to random initialization; thus, the floor starts at the conservative bound $\lambda_{\text{safe}}$ and linearly decays to $\lambda_{\text{base}}$ as training stabilizes. During the prolonged mid-phase ($T_1 \le t \ll T$), the exponent $p > 1$ creates an implicit acceleration plateau where $\lambda_t$ remains nearly flat around $\lambda_{\text{base}}$, fully unleashing second-order acceleration while the signal-to-noise ratio remains high. Finally, as the model enters flat basins in the late stage and the clamped fraction of eigenvalues inevitably rises, the polynomial tail drives $\lambda_t$ rapidly back to $\lambda_{\text{safe}}$ at $T$, seamlessly restoring the theoretical Euclidean safety guarantee.

\begin{algorithm}[tb]
\caption{Differentially Private Natural Gradient Descent (DP-NGD)}
\label{alg:dp_ngd}
\begin{algorithmic}[1]
\REQUIRE Private dataset $\mathcal{D}$ (size $N$), public dataset $\mathcal{D}_{\text{pub}}$, learning rate $\eta$, DP clipping bound $C$, noise multiplier $\sigma$, expected batch size $B$, curvature update interval $T_{\text{cur}}$, total iterations $T$.
\ENSURE Differentially private model parameters $\theta_T$.
\STATE Initialize model parameters $\theta_0$.
\STATE Compute theoretical safety floor $\lambda_{\text{safe}}$ via Eq.~(\ref{equation:lambda_safe}).
\FOR{$t = 0$ \TO $T-1$}
    
    \vspace{4pt}
    \STATE \textbf{[Module 1] Privacy-Free Curvature Estimation}
    \IF{$t \bmod T_{\text{cur}} = 0$}
        \STATE Compute empirical curvature $F$ on $\mathcal{D}_{\text{pub}}$ via K-FAC.
        \STATE Perform eigendecomposition: $F = Q \Lambda Q^\top$. \COMMENT{Cache $Q$, $\Lambda$ for $T_{\text{cur}}$ steps}
    \ENDIF

    \vspace{4pt}
    \STATE \textbf{[Module 2] Dynamic Curvature Clamping}
    \STATE Compute dynamic clamping floor $\lambda_t$ for step $t$ via Eq.~(\ref{equation:dynamic_schedule}).
    \STATE Clamp eigenvalues element-wise: $\tilde{\Lambda} = \max(\Lambda, \lambda_t \mathbf{I})$.
    \STATE Update whitening matrix: $\tilde{F}^{-1/2} = Q \tilde{\Lambda}^{-1/2} Q^\top$.

    \vspace{4pt}
    \STATE \textbf{[Module 3] Whitened-Space Update Mechanism}
    \STATE Sample batch $\mathcal{B}_t \subseteq \mathcal{D}$ with probability $q = B/N$. 
    \STATE Compute per-sample gradients: $g_i^{(t)} = \nabla_\theta \mathcal{L}(x_i, \theta_t)$ for each sample $x_i \in \mathcal{B}_t$.
    \STATE $\Delta\theta_t \leftarrow \text{WhitenedSpaceUpdate}\big(\{g_i^{(t)}\}_{x_i \in \mathcal{B}_t}, \tilde{F}^{-1/2}, C, \sigma\big)$. \COMMENT{Algorithm~\ref{alg:whitened_update}: Project $\to$ Clip $\to$ Noise $\to$ Inverse project}
    
    \vspace{4pt}
    \STATE Update parameters: $\theta_{t+1} = \theta_t + \Delta\theta_t$.
\ENDFOR
\STATE \textbf{Return} $\theta_T$.
\end{algorithmic}
\end{algorithm}

\subsection{The Complete Algorithm}

We consolidate our three core components into a unified, end-to-end DP-NGD framework. Algorithm~\ref{alg:dp_ngd} outlines the complete training procedure, which is structurally divided into three modules corresponding to the fundamental challenges.

\textbf{Module 1: Privacy-Free Curvature Estimation.} To avoid incurring significant privacy cost on curvature estimation, we precompute the empirical Fisher matrix $F$ exclusively on a small public auxiliary dataset $\mathcal{D}_{\text{pub}}$ using the K-FAC approximation. To amortize the computational overhead, curvature estimation and its subsequent eigendecomposition ($F = Q \Lambda Q^\top$) are performed once every $T_{\text{cur}}$ iterations and cached for reuse.

\textbf{Module 2: Dynamic Curvature Clamping.} Before constructing the whitening matrix $\tilde{F}^{-1/2}$, we clamp the curvature eigenvalues to a dynamically scheduled floor $\lambda_t$: $\tilde{\Lambda} = \max(\Lambda, \lambda_t \mathbf{I})$. Here, $\lambda_t$ is generated solely based on the current training step $t$ according to Eq.~(\ref{equation:dynamic_schedule}). This stage-aware clamping serves a threefold purpose: it stabilizes randomly initialized parameters during the early phase, fully unleashes second-order acceleration in the high-SNR mid-phase, and ensures robust convergence in late-stage flat basins.

\textbf{Module 3: Whitened-Space Update Mechanism.} A minibatch $\mathcal{B}_t$ is sampled from the private dataset via Poisson subsampling with probability $q$. We then compute the per-sample gradients $\{g_i^{(t)}\}_{x_i \in \mathcal{B}_t}$ and execute the whitened-space mechanism defined in Algorithm~\ref{alg:whitened_update}. This subroutine follows a seamless pipeline: projecting gradients into the whitened space, applying isotropic $L_2$-norm clipping, injecting isotropic Gaussian noise, and finally mapping the aggregated update back to the parameter space via $\tilde{F}^{-1/2}$. By the mathematical equivalence established in Theorem~\ref{theorem: curvature_alignment}, this mechanism preserves the anisotropic acceleration of NGD under rigorous DP guaranties.

\vspace{5pt}
\noindent \textbf{Remark: Computational Efficiency.} 
Despite the additional steps introduced by curvature estimation, the computational overhead of DP-NGD is well-amortized. This efficiency stems from three structural properties: (1) curvature estimation operates on a small public batch and is lazily updated only once every $T_{\text{cur}}$ iterations; (2) the K-FAC approximation reduces the eigendecomposition complexity from $\mathcal{O}(d^3)$ to $\mathcal{O}(d_{\text{in}}^3 + d_{\text{out}}^3)$, where $d_{\text{in}}, d_{\text{out}} \ll d$; and (3) second-order optimization substantially reduces the total number of iterations required to achieve baseline accuracy, effectively offsetting the additional per-step cost. A detailed empirical analysis of end-to-end wall-clock runtime is provided in Section~\ref{sec:wall-clock-time}.

\subsection{Privacy Analysis}
\label{sec:privacy analysis}

We now establish the privacy guarantee of the complete DP-NGD framework. Among the three modules of Algorithm~\ref{alg:dp_ngd}, Modules 1 and 2 incur zero privacy cost. The curvature matrix $F$ is estimated exclusively on the public auxiliary dataset $\mathcal{D}_{\text{pub}}$, which is disjoint from the private training set and lies strictly outside the adjacency relation that defines differential privacy. The dynamic clamping floor $\lambda_t$ is computed solely from the current training step $t$ and the predefined hyperparameters $\eta$ and $C$, without accessing any private data. Both modules are therefore privacy-free by construction.

Module 3, the whitened-space update, is the only component that accesses private data. It consists of four privacy-relevant operations: projecting gradients through the precomputed linear map $\tilde{F}^{-1/2}$; clipping them to a constant $L_2$-sensitivity $C$; injecting isotropic Gaussian noise parameterized by $\sigma$ and $C$; and performing the inverse projection via $\tilde{F}^{-1/2}$. Because the whitening matrix $\tilde{F}^{-1/2}$ is derived from public data and remains fixed for the current step, it acts as a linear transformation strictly independent of the private data. By the post-processing property of DP, both the projection and the inverse projection via $\tilde{F}^{-1/2}$ incur no additional privacy cost. Consequently, the privacy loss of Module 3 stems exclusively from the intermediate clipping and noise injection steps.

The clipping and noise injection steps in DP-NGD are executed identically to those in standard DP-SGD, while the only differences are the clipping space (whitened vs.\ Euclidean) and the clipping threshold ($C_\text{ngd}$ vs.\ $C_{\text{sgd}}$), as summarized in Table~\ref{table:privacy_cost_comparison}. The clipping space depends solely on the whitening matrix $\tilde{F}^{-1/2}$, which has been shown to leave the privacy budget unchanged. The clipping threshold acts symmetrically as a scaling factor for both the global sensitivity and the injected noise variance, mathematically canceling out in the privacy loss derivation. Thus, the step-wise privacy loss of DP-NGD is identical to that of standard DP-SGD. Given this strict equivalence, the end-to-end privacy guarantee over $T$ iterations composes identically to that of standard DP-SGD, and the final $(\epsilon, \delta)$-DP bound can be tightly tracked by any standard privacy accountant (e.g., Rényi DP or the PRV accountant) without modification.

\begin{table}[htbp]
\centering
\caption{Privacy cost comparison between DP-SGD and DP-NGD.The only differences are the clipping space and the clipping threshold. The former depends on a fixed linear mapping $\tilde{F}^{-1/2}$ that does not alter the privacy cost. The latter identically scales both global sensitivity and noise variance, canceling out in the privacy loss derivation. Therefore, identical per-step privacy loss is guaranteed, yielding the exact same total privacy budget.}
\label{table:privacy_cost_comparison}
\begin{tabular}{@{}lcc@{}}
\toprule
\textbf{Component} & \textbf{Standard DP-SGD} & \textbf{DP-NGD (Ours)} \\ 
\midrule
Clipping space & Euclidean ($\|\cdot\|_2$) & Whitened ($\|\cdot\|_{\tilde{F}^{-1}}$) \\ 
Clipping threshold & $C_{\text{sgd}}$ & $C_{\text{ngd}}$ \\ 
Noise mechanism & $\mathcal{N}\big(0, \sigma^2 C_{\text{sgd}}^2 \mathbf{I}\big)$ & $\mathcal{N}\big(0, \sigma^2 C_{\text{ngd}}^2 \mathbf{I}\big)$ \\ 
Per-step Privacy Loss & Determined by $\sigma, q$ & Determined by $\sigma, q$ \\ 
Total Budget & $(\epsilon,\delta)$ & $(\epsilon,\delta)$ \\ 
\bottomrule
\end{tabular}
\end{table}

\section{Evaluation}
\label{sec:experiments}

\subsection{Experimental Setup}
\label{subsec:setup}

\noindent \textbf{Datasets and Models.} We evaluate on three standard benchmarks: CIFAR-10~\cite{cifar10}, SVHN~\cite{svhn}, and UTKFace~\cite{utkface}, covering diverse image domains. We adopt WRN-16-4~\cite{dp_randp_wrn-16-4} and ResNet-20~\cite{resnet20} as our model architectures, both widely used in DP research. All Batch Normalization layers are replaced with Group Normalization (GN)~\cite{groupnorm,Groupnorm222} to prevent privacy leakage through batch statistics. A small public auxiliary set containing 500 images ($\approx 1\%$ of the training set size) is constructed by random sampling from out-of-distribution (OOD) sources—CIFAR-100 and ImageNet. All images are resized to $32 \times 32$ pixels.

\vspace{3pt}
\noindent \textbf{Baselines.}
The essence of NGD lies in utilizing curvature to \linebreak precondition (or project) the first-order gradients. Consequently, the advantage of our method hinges on the quality of the curvature estimation. To validate this, we construct a strictly controlled suite of baselines, which also leverage public data to construct their own preconditioners or projections. To ensure fair comparison, \linebreak every auxiliary-data-assisted baseline adopts exactly the same public dataset $\mathcal{D}_{\text{pub}}$.

\noindent $\bullet$ \textit{DP-SGD \& DP-SGD-PT}. DP-SGD is the canonical isotropic first-order baseline, implemented following the state-of-the-art recipe of DeepMind~\cite{deepmind}. To verify that our gains stem from geometric curvature extraction rather than mere feature initialization, we additionally introduce DP-SGD-PT, which is pre-trained on $\mathcal{D}_{\text{pub}}$ before DP training.

\noindent $\bullet$ \textit{AdaDPS}. AdaDPS~\cite{adadps} is a representative adaptive DP optimizer that uses $\mathcal{D}_{\text{pub}}$ to estimate gradient second moments, approximating the diagonal of the Fisher matrix $F$ as per parameter learning‑rate scalers. This baseline tests whether capturing only diagonal variance is sufficient under DP; the performance gap relative to DP-NGD reveals the necessity of capturing parameter cross-correlations through our block-diagonal preconditioner.

\noindent $\bullet$ \textit{GEP}. Gradient Embedding Perturbation (GEP)~\cite{GEP} uses $\mathcal{D}_{\text{pub}}$ to construct a low-rank subspace via PCA~\cite{pca111,pca222}, and subsequently projects private gradients onto this subspace to reduce the noise dimension. Comparing against GEP demonstrates that low-rank geometric projections tend to discard critical tail features, whereas our block-diagonal preconditioner operates in the full-dimensional parameter space, preserving the complete manifold topology without information truncation.

\vspace{3pt}
\noindent \textbf{Unified Training Configuration.}
To establish strong baselines, all methods share a unified modern DP training recipe: $16\times$ augmentation multiplicity~\cite{data_augmentation111}, exponential moving average (EMA)~\cite{ema111,ema222} for evaluation, and a uniformly large batch size of $B=4096$. We evaluate the privacy-utility trade-off across privacy budgets $\epsilon \in \{1.0, 2.0, 4.0, 6.0, 8.0\}$ with a fixed $\delta = 10^{-5}$. The expected batch size $B=4096$ implies the same subsampling rate $q = B/N$ for all methods. To guarantee absolute comparison fairness, the per-step noise multiplier $\sigma$ is independently calibrated for each method via the PRV accountant~\cite{prv_account1,prv_account2} provided by Opacus, ensuring that the target privacy budget is exactly exhausted at the final iteration.

Crucially, we emphasize that the varying choices of the clipping threshold $C$ do not compromise comparison fairness. Since $C$ symmetrically scales both the global sensitivity and the injected noise variance, it cancels out mathematically in the privacy loss derivation, thus affecting solely the optimization dynamics rather than the privacy budget. Notably, by the KL-DP duality established in Section~\ref{sec:whitened space update}, our clipping threshold simultaneously defines the KL trust-region radius of NGD. We therefore adopt a fixed $C = 10.0$, directly inherited from the standard NGD KL constraint. For the baseline methods, we set the clipping threshold $C$ to $1.0$ for DP-SGD and $2.0$ for AdaDPS, following their respective optimal configurations. For GEP, we adopt its recommended settings of $C=10.0$ for the gradient embedding and $C=2.0$ for the residual gradient.

\vspace{3pt}
\noindent \textbf{Hyperparameter Tuning.}
To report every method at its peak performance, we conduct a rigorous hyperparameter tuning protocol. For the universal optimization parameters---specifically the learning rate $\eta$ and total training epochs $T$---we perform a comprehensive grid search to identify the optimal utility-privacy trade-off for each individual method. For method-specific structural parameters (e.g., adaptive momentum factors), we strictly adopt the recommended values from the original papers; when explicit recommendations are absent, we search within a narrow range around the original experimental settings. This protocol ensures that any observed performance gaps reflect fundamental algorithmic advantages rather than suboptimal hyperparameter choices. For our DP-NGD, we set the auxiliary public dataset size to $|\mathcal{D}_{\text{pub}}| = 500$ (approximately $1\%$ of the private training data), the curvature update interval to $T_{\text{cur}} = 8$, and configure the dynamic clamping schedule with $T_1 = 0.1\,T$ and $p = 10$. The robustness of these choices is validated by the sensitivity analysis in Section~\ref{sec:hyper-parameter-robustness}.

\subsection{End-to-End Utility Evaluation}
\label{subsec:utility}

\begin{table*}[ht]
\footnotesize
\caption{\textbf{Test accuracy (\%) across three datasets and five privacy budgets. Results report the mean and standard deviation over 5 independent runs. Best results are in bold. All public-data-assisted methods use exactly the same auxiliary dataset.}}
\label{table:utility_evaluation}
\begin{center}
\renewcommand{\arraystretch}{1.3} 
\begin{tabular}{
    m{2.2cm}<{\centering} 
    m{2.8cm}<{\centering} 
    *{5}{c} 
}
\toprule 
\multirow{2}{*}{\textbf{Dataset (Model)}} &
\multirow{2}{*}{\textbf{Method}} &
{\textbf{$\epsilon = 1.0$}} &
{\textbf{$\epsilon = 2.0$}} &
{\textbf{$\epsilon = 4.0$}} &
{\textbf{$\epsilon = 6.0$}} &
{\textbf{$\epsilon = 8.0$}} \\
\cmidrule(lr){3-3} \cmidrule(lr){4-4} \cmidrule(lr){5-5} \cmidrule(lr){6-6} \cmidrule(lr){7-7}
& & {\textbf{ACC}} & {\textbf{ACC}} & {\textbf{ACC}} & {\textbf{ACC}} & {\textbf{ACC}} \\
\midrule 

{CIFAR-10} & {DP-SGD} & {58.16\% \textcolor{gray}{\scriptsize $\pm$0.35}} & {65.71\% \textcolor{gray}{\scriptsize $\pm$0.41}} & {72.73\% \textcolor{gray}{\scriptsize $\pm$0.48}} & {77.49\% \textcolor{gray}{\scriptsize $\pm$0.65}} & {80.42\% \textcolor{gray}{\scriptsize $\pm$0.52}} \\
{(WRN-16-4)} & {DP-SGD-PT} & {58.68\% \textcolor{gray}{\scriptsize $\pm$0.32}} & {65.83\% \textcolor{gray}{\scriptsize $\pm$0.49}} & {73.06\% \textcolor{gray}{\scriptsize $\pm$0.35}} & {77.94\% \textcolor{gray}{\scriptsize $\pm$0.61}} & {80.83\% \textcolor{gray}{\scriptsize $\pm$0.39}} \\
{} & {GEP} & {58.75\% \textcolor{gray}{\scriptsize $\pm$0.41}} & {66.48\% \textcolor{gray}{\scriptsize $\pm$0.36}} & {73.46\% \textcolor{gray}{\scriptsize $\pm$0.30}} & {78.24\% \textcolor{gray}{\scriptsize $\pm$0.58}} & {80.55\% \textcolor{gray}{\scriptsize $\pm$0.25}} \\
{} & {AdaDPS} & {59.25\% \textcolor{gray}{\scriptsize $\pm$0.45}} & {66.94\% \textcolor{gray}{\scriptsize $\pm$0.39}} & {73.59\% \textcolor{gray}{\scriptsize $\pm$0.54}} & {78.57\% \textcolor{gray}{\scriptsize $\pm$0.49}} & {80.62\% \textcolor{gray}{\scriptsize $\pm$0.66}} \\
{} & {\textbf{DP-NGD (Ours)}} & \textbf{60.78\% \textcolor{gray}{\scriptsize $\pm$0.38}} & \textbf{67.75\% \textcolor{gray}{\scriptsize $\pm$0.56}} & \textbf{74.02\% \textcolor{gray}{\scriptsize $\pm$0.44}} & \textbf{78.61\% \textcolor{gray}{\scriptsize $\pm$0.49}} & \textbf{81.09\% \textcolor{gray}{\scriptsize $\pm$0.45}} \\
\midrule

{SVHN} & {DP-SGD} & {79.34\% \textcolor{gray}{\scriptsize $\pm$0.58}} & {83.96\% \textcolor{gray}{\scriptsize $\pm$0.35}} & {87.01\% \textcolor{gray}{\scriptsize $\pm$0.41}} & {87.49\% \textcolor{gray}{\scriptsize $\pm$0.28}} & {88.60\% \textcolor{gray}{\scriptsize $\pm$0.55}} \\
{(WRN-16-4)} & {DP-SGD-PT} & {79.54\% \textcolor{gray}{\scriptsize $\pm$0.45}} & {84.11\% \textcolor{gray}{\scriptsize $\pm$0.52}} & {87.37\% \textcolor{gray}{\scriptsize $\pm$0.39}} & {88.59\% \textcolor{gray}{\scriptsize $\pm$0.46}} & {89.18\% \textcolor{gray}{\scriptsize $\pm$0.54}} \\
{} & {GEP} & {79.73\% \textcolor{gray}{\scriptsize $\pm$0.31}} & {86.21\% \textcolor{gray}{\scriptsize $\pm$0.37}} & {88.27\% \textcolor{gray}{\scriptsize $\pm$0.54}} & {88.84\% \textcolor{gray}{\scriptsize $\pm$0.60}} & {89.29\% \textcolor{gray}{\scriptsize $\pm$0.58}} \\
{} & {AdaDPS} & {79.87\% \textcolor{gray}{\scriptsize $\pm$0.35}} & {86.12\% \textcolor{gray}{\scriptsize $\pm$0.51}} & {88.23\% \textcolor{gray}{\scriptsize $\pm$0.38}} & {89.35\% \textcolor{gray}{\scriptsize $\pm$0.44}} & {89.98\% \textcolor{gray}{\scriptsize $\pm$0.41}} \\
{} & {\textbf{DP-NGD (Ours)}} & \textbf{84.65\% \textcolor{gray}{\scriptsize $\pm$0.37}} & \textbf{88.42\% \textcolor{gray}{\scriptsize $\pm$0.45}} & \textbf{90.70\% \textcolor{gray}{\scriptsize $\pm$0.52}} & \textbf{92.28\% \textcolor{gray}{\scriptsize $\pm$0.48}} & \textbf{92.67\% \textcolor{gray}{\scriptsize $\pm$0.47}} \\
\midrule

{UTKFace} & {DP-SGD} & {57.53\% \textcolor{gray}{\scriptsize $\pm$0.45}} & {65.02\% \textcolor{gray}{\scriptsize $\pm$0.40}} & {70.15\% \textcolor{gray}{\scriptsize $\pm$0.35}} & {73.10\% \textcolor{gray}{\scriptsize $\pm$0.31}} & {74.17\% \textcolor{gray}{\scriptsize $\pm$0.58}} \\
{(ResNet-20)} & {DP-SGD-PT} & {58.20\% \textcolor{gray}{\scriptsize $\pm$0.41}} & {65.46\% \textcolor{gray}{\scriptsize $\pm$0.36}} & {70.32\% \textcolor{gray}{\scriptsize $\pm$0.32}} & {73.29\% \textcolor{gray}{\scriptsize $\pm$0.58}} & {74.55\% \textcolor{gray}{\scriptsize $\pm$0.55}} \\
{} & {GEP} & {58.49\% \textcolor{gray}{\scriptsize $\pm$0.52}} & {67.90\% \textcolor{gray}{\scriptsize $\pm$0.45}} & {71.24\% \textcolor{gray}{\scriptsize $\pm$0.38}} & {74.43\% \textcolor{gray}{\scriptsize $\pm$0.44}} & {75.38\% \textcolor{gray}{\scriptsize $\pm$0.53}} \\
{} & {AdaDPS} & {58.75\% \textcolor{gray}{\scriptsize $\pm$0.55}} & {68.49\% \textcolor{gray}{\scriptsize $\pm$0.48}} & {71.19\% \textcolor{gray}{\scriptsize $\pm$0.41}} & {74.56\% \textcolor{gray}{\scriptsize $\pm$0.37}} & {75.58\% \textcolor{gray}{\scriptsize $\pm$0.33}} \\
{} & {\textbf{DP-NGD (Ours)}} & \textbf{62.95\% \textcolor{gray}{\scriptsize $\pm$0.55}} & \textbf{70.01\% \textcolor{gray}{\scriptsize $\pm$0.38}} & \textbf{73.88\% \textcolor{gray}{\scriptsize $\pm$0.48}} & \textbf{76.05\% \textcolor{gray}{\scriptsize $\pm$0.45}} & \textbf{76.22\% \textcolor{gray}{\scriptsize $\pm$0.52}} \\

\bottomrule 
\end{tabular}
\end{center}
\end{table*}

Table~\ref{table:utility_evaluation} summarizes the test accuracies across three standard benchmarks and five privacy budgets. DP-NGD consistently achieves the highest performance in every setting. The advantage is more pronounced under strict privacy constraints: at $\epsilon=1.0$, DP-NGD outperforms the most competitive baseline by $4.78\%$ on SVHN and $4.20\%$ on UTKFace. This aligns with our core insight---when first-order gradients are heavily corrupted by DP noise, coarse-grained curvature priors deliver the strongest acceleration benefit. As $\epsilon$ increases and the signal-to-noise ratio improves, the relative gains from second-order acceleration gradually moderate (e.g., sub-$1\%$ improvement on CIFAR-10 at $\epsilon=8.0$). This is because, when the noise magnitude is sufficiently low, the residual inaccuracies in curvature estimation from public data become a limiting factor for leveraging precise fine-grained gradient components.

Comparing DP-NGD against AdaDPS and GEP validates the necessity of our block-diagonal preconditioner. AdaDPS captures only diagonal variance, while GEP projects gradients into a low-rank subspace; both methods simplify the geometry and inevitably discard information---cross-parameter correlations in AdaDPS, and fine-grained tail features in GEP. By employing a block-diagonal preconditioner that operates in the full-dimensional parameter space, DP-NGD consistently surpasses both. The performance of DP-SGD-PT further sharpens this picture: simple pre-training on $\mathcal{D}_{\text{pub}}$ yields only marginal utility gains over standard DP-SGD and is largely outperformed by methods that actively exploit the geometry of the gradient space. This empirical gap confirms that the primary value of auxiliary public data lies not in mere feature initialization, but in extracting transferable geometric priors that reshape the optimization dynamics. Ultimately, these comparisons establish that the structural fidelity of the geometric prior directly dictates the utility gains under differential privacy.

\subsection{Convergence Efficiency and Speedup}

\begin{figure*}[t]
\centering
\includegraphics[width=0.85\textwidth]{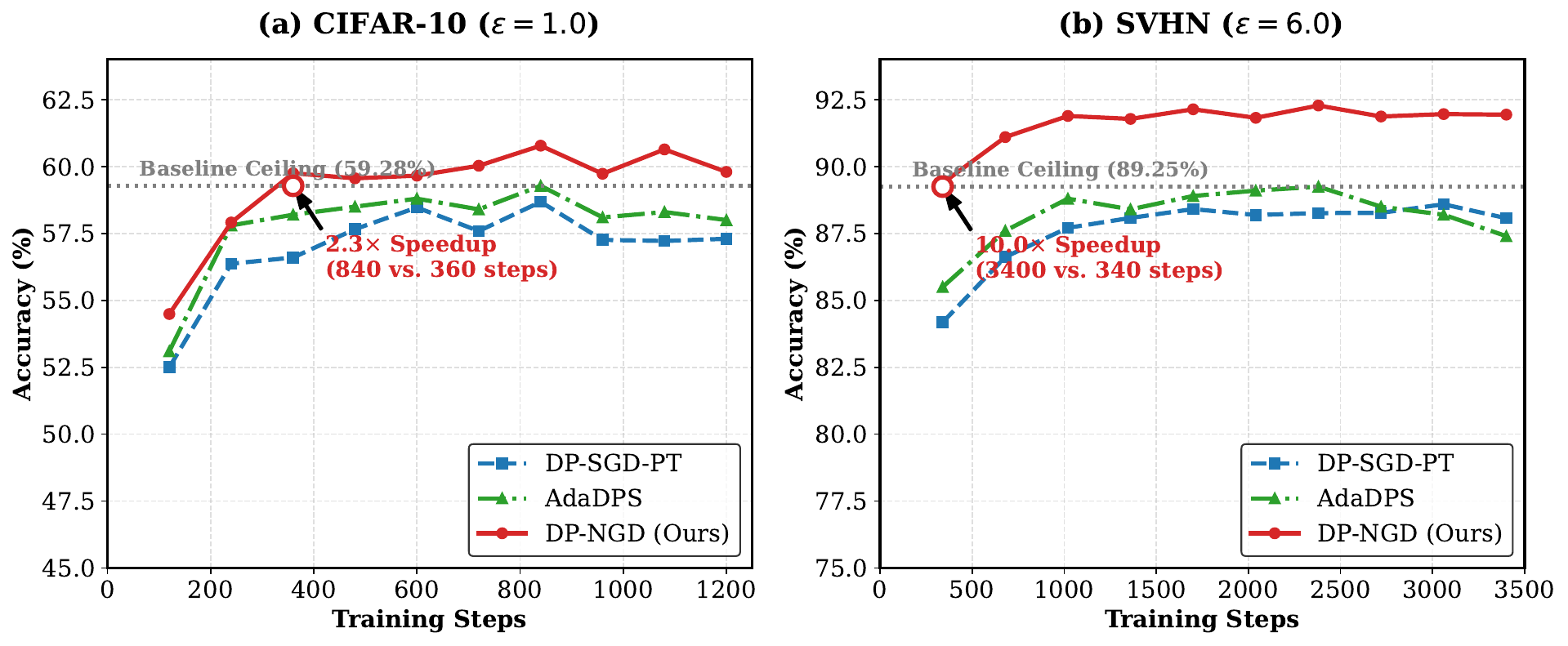} 
\vspace{-0.2cm}
\caption{Convergence efficiency vs.\ total training steps. We compare the test accuracy of DP-NGD, DP-SGD-PT, and AdaDPS under a strict privacy budget (CIFAR-10, $\epsilon=1.0$) and a moderate privacy budget (SVHN, $\epsilon=6.0$). Each point represents an independent full training run with a given total number of steps, where the per-step noise multiplier $\sigma$ is calibrated to exactly exhaust the target privacy budget at the final step. DP-NGD consistently achieves higher accuracy at every step budget, demonstrating superior per-step optimization efficiency.}
\label{fig:convergence_efficiency}
\end{figure*}

Figure~\ref{fig:convergence_efficiency} reports the test accuracy of DP-NGD, DP-SGD-PT, and AdaDPS at varying total training steps under a low privacy budget (CIFAR-10, $\epsilon=1.0$) and a moderate privacy budget (SVHN, $\epsilon=6.0$). Each point on the horizontal axis represents an independent full training run with a given total number of steps, where the per-step noise multiplier $\sigma$ is calibrated to exactly exhaust the target privacy budget at the final step. DP-NGD consistently achieves higher accuracy than the baselines when evaluated at the same training step. Moreover, it reaches the peak performance of the baselines with substantially fewer steps. On CIFAR-10, DP-NGD attains $59.75\%$ at step~360, already exceeding the best accuracy of AdaDPS ($58.28\%$ at step~840)---a $2.3\times$ reduction in the steps required to reach that utility level. On SVHN, DP-NGD reaches $89.38\%$ at step~340, surpassing the peak of both DP-SGD-PT and AdaDPS ($89.25\%$ at step~3400), corresponding to a $10.0\times$ efficiency gain. This advantage stems from the higher per-step optimization efficiency of curvature-aware updates. By preconditioning gradients with the block-diagonal inverse curvature, DP-NGD aligns each update with the dominant geometric directions of the loss landscape. This effectively suppresses wasteful oscillations in ill-conditioned terrain and extracting more meaningful signal per noisy step compared to diagonal (AdaDPS) or isotropic (DP-SGD-PT) approaches.

\subsection{Sample Efficiency of Public Data}
\label{sec:sample efficiency}

\begin{figure}[t] 
\centering
\includegraphics[width=0.95\columnwidth]{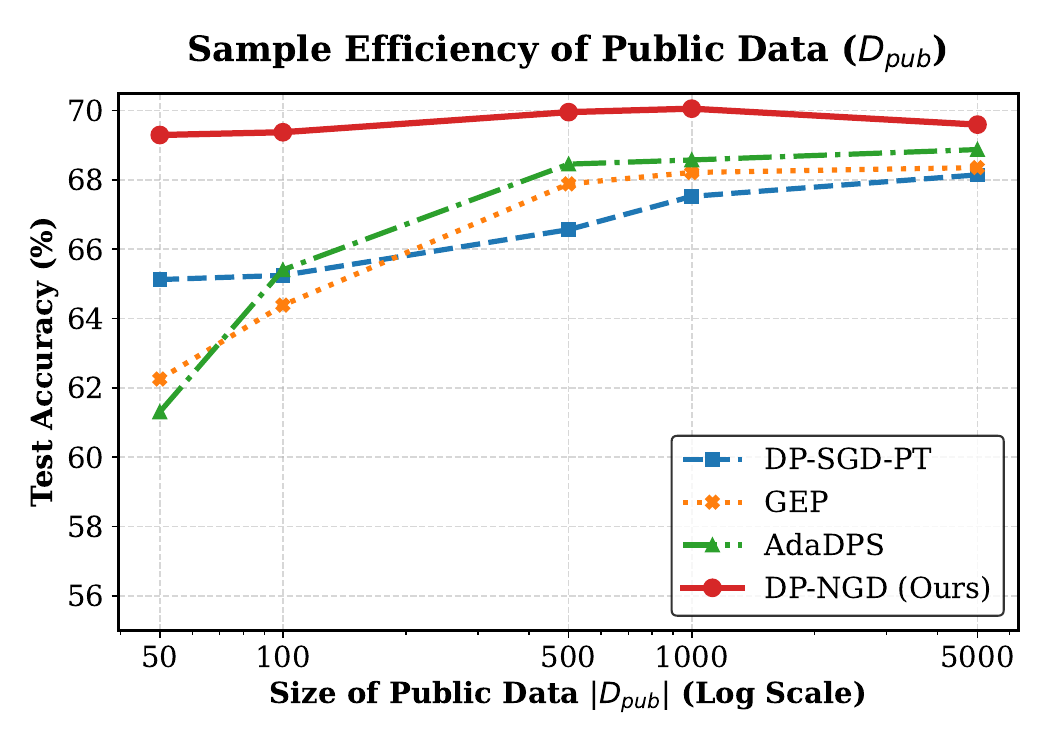} 
\vspace{-0.2cm} 
\caption{Sample efficiency of public auxiliary data. Test accuracy is reported on the UTKFace dataset ($\epsilon=2.0$) across varying public dataset sizes $|\mathcal{D}_{\text{pub}}|$. DP-NGD achieves near-peak accuracy with very few public samples and remains insensitive to $|\mathcal{D}_{\text{pub}}|$ across the entire range. In contrast, DP-SGD-PT, AdaDPS, and GEP all exhibit a clear dependence on the amount of public data, with their accuracy improving steadily with more data.}
\label{fig:sample_efficiency}
\vspace{-0.4cm} 
\end{figure}

Figure~\ref{fig:sample_efficiency} evaluates the impact of the public dataset size $|\mathcal{D}_{\text{pub}}|$ on model utility using the UTKFace dataset at $\epsilon=2.0$. By varying the number of public samples from $50$ to $5000$, we assess how effectively each method extracts geometric priors from limited auxiliary data. DP-NGD exhibits remarkable sample efficiency and a distinct insensitivity to $|\mathcal{D}_{\text{pub}}|$. With as few as $50$ public samples, our method already achieves $69.29\%$ test accuracy, and this performance fluctuates by less than $1\%$ across the entire evaluated range. In striking contrast, the baselines show a clear dependence on the volume of public data. While the accuracy of DP-SGD-PT, AdaDPS, and GEP improves steadily as $|\mathcal{D}_{\text{pub}}|$ increases, they consistently underperform DP-NGD even when utilizing the maximum available $5000$ samples. 

This stark difference directly reinforces our core claim: coarse-grained curvature priors are sufficient for effective second-order acceleration under DP. The K-FAC approximation compresses curvature degrees of freedom from $\mathcal{O}(d^2)$ to layer-width dimensions, enabling reliable estimation of the dominant geometric directions from very few public samples. Although additional public data might refine fine-grained curvature components, these precise directions are ultimately overwhelmed by DP noise during training, resulting in diminishing returns. Conversely, baselines relying on diagonal scaling (AdaDPS) or low-rank subspaces (GEP) require significantly more samples to construct statistically stable surrogates of the loss geometry. Furthermore, DP-SGD-PT, which exploits public data merely for weight initialization, yields the weakest per-sample benefit. Overall, these results confirm that extracting structural geometric priors is a fundamentally more data-efficient strategy to leverage auxiliary information.

\subsection{Ablation Studies}

\begin{table}[ht]
\footnotesize
\caption{Necessity of curvature decoupling on CIFAR-10 at $\epsilon=2.0$ and $4.0$. We evaluate the impact of allocating a fraction of the privacy budget ($20\%$ or $50\%$) to private curvature estimation. Even a modest budget split severely degrades accuracy, dropping performance below the first-order DP-SGD-PT baseline that uses no curvature at all.} 
\label{tab:ablation_decoupling}
\begin{center}
\renewcommand{\arraystretch}{1.2}
\begin{tabular}{l c c}
\toprule
\textbf{Variant Configuration} & \textbf{$\epsilon=2.0$} & \textbf{$\epsilon=4.0$} \\
\midrule
DP-SGD-FT (No Curvature) & 65.86\% & 72.75\% \\
\midrule
Private DP-NGD ($\epsilon$ split 50/50) & 54.35\% & 61.23\% \\
Private DP-NGD ($\epsilon$ split 20/80) & 56.12\% & 64.50\% \\
\midrule
\textbf{Public DP-NGD (Decoupled)} & \textbf{67.78\%} & \textbf{73.88\%} \\
\bottomrule
\end{tabular}
\end{center}
\vspace{-0.4cm}
\end{table}

\noindent \textbf{The Necessity of Curvature Decoupling.} We verify the necessity of decoupling curvature estimation from private data by comparing three strategies on CIFAR‑10 at $\epsilon=2.0$ and $4.0$: (i) our default DP‑NGD, which estimates the curvature matrix entirely on public data and preserves the full privacy budget for gradient updates; (ii) a variant that estimates curvature on private data, allocating a fraction of the privacy budget to the curvature computation and the remainder to gradient updates; and (iii) DP‑SGD‑PT, which merely pre‑trains on public data without any curvature estimation. When $20\%$ of the privacy budget is diverted to private curvature estimation, the accuracy drops dramatically from $67.78\%$ to $56.12\%$ at $\epsilon=2.0$, and from $73.88\%$ to $64.50\%$ at $\epsilon=4.0$. A $50\%$ split worsens the results further, yielding only $54.35\%$ and $61.23\%$, respectively. In sharp contrast, our public‑data‑driven curvature estimation incurs zero privacy cost and consistently outperforms even the DP‑SGD‑PT baseline ($65.86\%$ and $72.75\%$). These results expose a fundamental tension: under a fixed privacy budget, splitting that budget between curvature and gradients severely degrades the signal‑to‑noise ratio of the updates that actually drive learning, leading to catastrophic utility collapse. Decoupling curvature extraction from private data is therefore not merely a convenience but a necessary condition for making second‑order optimization viable under differential privacy.

\begin{table}[ht]
\footnotesize
\caption{Ablation on whitened-space clipping on CIFAR-10 ($\epsilon \in \{2.0, 4.0\}$). We compare our whitened-space DP operations against two alternatives: pre-conditioning clipping (clipping raw gradients before applying curvature) and post-conditioning clipping (clipping the natural gradient directly). Pre-conditioning suffers from geometric incompatibility, while post-conditioning distorts the anisotropic descent direction despite being privacy-compliant. Whitened-space clipping uniquely satisfies both the strict DP sensitivity bound and the directional integrity of second-order updates.}
\label{tab:ablation_clipping}
\begin{center}
\renewcommand{\arraystretch}{1.2}
\begin{tabular}{l c c}
\toprule
\textbf{Clipping Strategy} & \textbf{$\epsilon=2.0$} & \textbf{$\epsilon=4.0$} \\
\midrule
Pre-conditioning Clipping & 61.25\% & 66.82\% \\
Post-conditioning Clipping & 64.54\% & 68.48\% \\
\midrule
\textbf{Whitened-Space Clipping (Ours)} & \textbf{67.78\%} & \textbf{73.88\%} \\
\bottomrule
\end{tabular}
\end{center}
\vspace{-0.4cm}
\end{table}

\noindent \textbf{Ablation on Whitened-Space Clipping.} 
We validate the necessity of whitened-space clipping on CIFAR-10 ($\epsilon \in \{2.0, 4.0\}$) by comparing it against two alternatives: clipping raw gradients before applying curvature (\textit{pre-conditioning}), and clipping the resulting natural gradients directly (\textit{post-conditioning}). Our method achieves the highest accuracy ($67.78\%$ and $73.88\%$), substantially outperforming both pre-conditioning ($61.25\%$ and $66.82\%$) and post-conditioning ($64.54\%$ and $68.48\%$). Pre-conditioning performs worst: isotropic $L_2$ clipping in the original space treats all gradient directions uniformly, irreversibly corrupting the gradient structure before curvature information can be incorporated. Post-conditioning, although privacy-compliant because the curvature matrix comes from public data, applies an isotropic clip to the already anisotropic natural gradient, thereby distorting its carefully aligned descent direction. Whitened-space clipping resolves this tension by performing DP operations in the $F^{-1/2}$-whitened space, where an isotropic $L_2$ constraint is mathematically equivalent to the anisotropic Fisher-norm constraint required by NGD. This preserves both the strict DP sensitivity bound and the directional integrity of second-order updates.

\begin{figure}[ht]
\centering
\includegraphics[width=0.9\columnwidth]{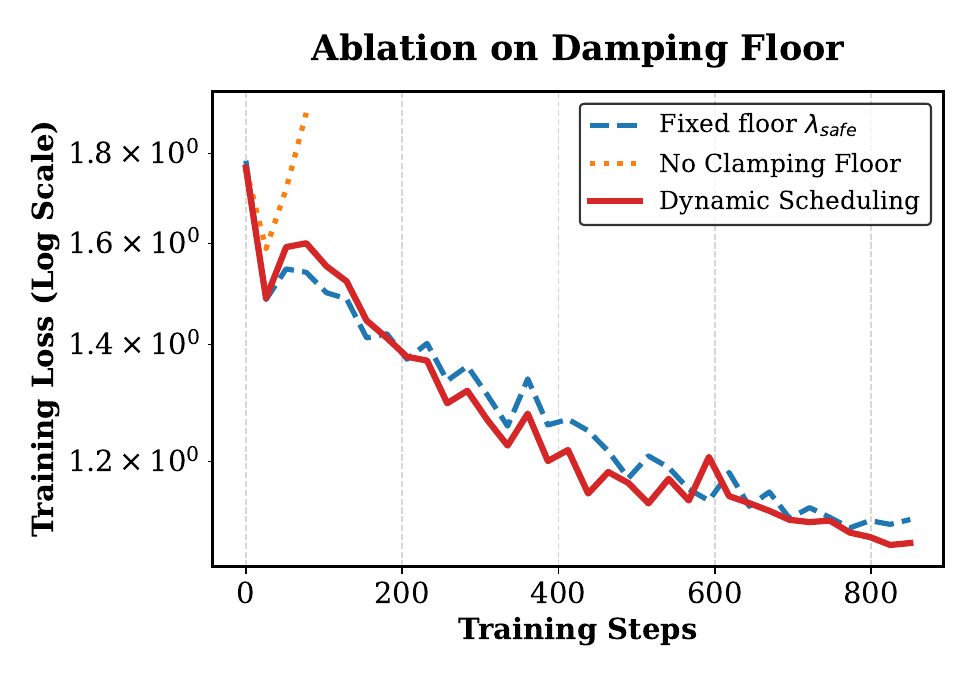}
\vspace{-0.2cm}
\caption{Ablation on the clamping floor using UTKFace ($\epsilon=2.0$). Training loss trajectories for three strategies are shown. Training diverges rapidly without clamping, while both clamped strategies converge stably. Our dynamic schedule further relaxes the floor to avoid over-suppressing well-conditioned eigenvalues, yielding lower loss in high-SNR mid-phase and higher final accuracy.}
\label{fig:ablation_damping}
\vspace{-0.4cm}
\end{figure}

\noindent \textbf{Ablation on the Clamping Floor.} 
We evaluate the impact of the dynamic clamping schedule on training stability and final utility using UTKFace ($\epsilon=2.0$). We compare three configurations: (i) our proposed dynamic schedule, (ii) a fixed clamping floor set to the theoretical safety bound $\lambda_{\text{safe}}$, and (iii) no clamping floor at all. Training without any clamping diverges rapidly within the first few steps, producing NaN values and confirming that bounding only the $L_2$-norm in the whitened space is insufficient to guarantee Euclidean stability of the resulting natural gradient. Both the fixed $\lambda_{\text{safe}}$ and our dynamic schedule converge stably. In the early phase ($< 200$ steps), when curvature estimates are unreliable, the fixed $\lambda_{\text{safe}}$ yields a slightly lower training loss. However, as training stabilizes and enters the high-SNR mid-phase, the fixed floor becomes overly conservative. The loss curve of our dynamic schedule consistently lies below that of the fixed $\lambda_{\text{safe}}$ from the mid-phase onward, indicating that a static floor persistently lifts eigenvalues that are already well-conditioned, needlessly sacrificing curvature accuracy and suppressing the second-order acceleration. Our dynamic schedule avoids this by relaxing the floor toward a low base value, achieving consistently lower training loss and a markedly higher final test accuracy ($70.06\%$ vs.\ $68.53\%$).

\subsection{End-to-End System Overhead}
\label{sec:wall-clock-time}

\begin{table}[ht]
\vspace{0.2cm}
\footnotesize
\setlength{\tabcolsep}{4pt} 
\caption{End-to-end system overhead comparison between DP-NGD and DP-SGD-PT. 
Despite a higher per-step cost, DP-NGD achieves comparable or lower total runtime due to sharply fewer steps. All runs use a single NVIDIA RTX 3090.}
\label{tab:system_overhead}
\begin{center}
\renewcommand{\arraystretch}{1.2}
\begin{tabular}{@{} c c c c c @{}}
\toprule
\textbf{\makecell[t]{Dataset\\($\epsilon$)}} & 
\textbf{\makecell[t]{Method}} & 
\textbf{\makecell[t]{Per-step Time\\\textmd{(s)}}} & 
\textbf{\makecell[t]{Total Steps}} & 
\textbf{\makecell[t]{Total Time\\\textmd{(Hrs)}}} \\
\midrule
\multirow{2}{*}{\makecell{CIFAR-10\\($\epsilon=1.0$)}} 
& DP-SGD-PT & 14.2 & 840 & 3.31 \\
& \textbf{DP-NGD} & \textbf{38.2} & \textbf{360} & \textbf{3.82} \\
\midrule
\multirow{2}{*}{\makecell{SVHN\\($\epsilon=6.0$)}} 
& DP-SGD-PT & 13.6 & 3400 & 12.84 \\
& \textbf{DP-NGD} & \textbf{39.8} & \textbf{340} & \textbf{3.76} \\
\bottomrule
\end{tabular}
\end{center}
\vspace{-0.4cm}
\end{table}

Table~\ref{tab:system_overhead} reports the per-step wall-clock time, the number of training steps required to reach the peak accuracy of DP-SGD-PT, and the resulting total wall-clock time for both methods on two representative benchmarks. All measurements are performed on a single NVIDIA GeForce RTX 3090 GPU under identical batch size, augmentation, and privacy configurations. DP-NGD incurs a higher per-step cost than DP-SGD-PT ($38$--$40$\,s vs.\ $14$\,s), primarily due to the whitened-space projection and the periodic curvature estimation. However, this overhead is substantially offset by the sharply reduced number of steps resulting from curvature-aware optimization. On CIFAR-10 ($\epsilon=1.0$), DP-NGD reaches the peak accuracy of DP-SGD-PT in only $360$ steps, compared to $840$ steps for the baseline. The total runtime is $3.82$ hours, comparable to that of DP-SGD-PT ($3.31$ hours), while achieving higher final accuracy. On SVHN ($\epsilon=6.0$), the advantage is dramatic: DP-NGD requires only $340$ steps to match the baseline accuracy that demands $3{,}400$ steps for DP-SGD-PT. The total runtime drops from $12.84$ hours to $3.76$ hours---a $3.4\times$ speedup. This stark contrast shows that in ill-conditioned optimization landscapes, the ability of curvature-aware preconditioning to suppress wasteful oscillations far outweighs its per-step computational cost. 
Overall, these results confirm that DP-NGD offers practical 
accelerator for differentially private training.

\subsection{Hyperparameter Robustness}
\label{sec:hyper-parameter-robustness}

\begin{table}[ht]
\footnotesize
\setlength{\tabcolsep}{6pt}
\caption{Hyperparameter sensitivity on UTKFace ($\epsilon=2.0$). We evaluate the robustness of DP-NGD to the curvature update interval $T_{\text{cur}}$, the polynomial exponent $p$, and the transition step ratio $T_1$. Test accuracy remains stable across all three parameters, with fluctuations within a narrow range. Default values are underlined.}
\label{tab:sensitivity}
\begin{center}

\begin{tabular}{@{} c c c c c c @{}}
\toprule
\multicolumn{6}{c}{\textbf{(a) Sensitivity to Curvature Update Interval} ($T_{cur}$)} \\
\midrule
Value & 1 & 4 & \underline{8} & 16 & 32 \\
Accuracy & 70.14\% & 70.02\% & 70.05\% & 69.87\% & 67.53\% \\
\bottomrule
\end{tabular}

\vspace{0.3cm} 

\begin{tabular}{@{} c c c c c c @{}}
\toprule
\multicolumn{6}{c}{\textbf{(b) Sensitivity to Polynomial Exponent} ($p$)} \\
\midrule
Value & 2 & 3 & 5 & \underline{10} & 15 \\
Accuracy & 68.96\% & 69.21\% & 69.08\% & 70.05\% & 69.63\% \\
\bottomrule
\end{tabular}

\vspace{0.3cm} 

\begin{tabular}{@{} c c c c c c @{}}
\toprule
\multicolumn{6}{c}{\textbf{(c) Sensitivity to Transition Step} ($T_1$ ratio)} \\
\midrule
Value & 0.05 & \underline{0.1} & 0.15 & 0.2 & 0.25 \\
Accuracy & 69.51\% & 70.05\% & 69.25\% & 69.46\% & 69.26\% \\
\bottomrule
\end{tabular}

\end{center}
\vspace{-0.4cm}
\end{table}

Table~\ref{tab:sensitivity} evaluates the sensitivity of DP-NGD to its three key hyperparameters on UTKFace ($\epsilon=2.0$): the curvature update interval $T_{\text{cur}}$, the polynomial exponent $p$, and the transition step ratio $T_1$. Across all three parameters, the test accuracy remains stable within a narrow range, confirming the robustness of our framework to these design choices. The curvature update interval $T_{\text{cur}}$ can be extended from $1$ to $16$ steps with less than a $0.3\%$ accuracy drop; only at an extremely sparse update frequency of $32$ steps does accuracy decline noticeably to $67.53\%$, still a competitive result. The polynomial exponent $p$ and the transition ratio $T_1$ exhibit similar insensitivity: varying $p$ from $2$ to $15$ or $T_1$ from $0.05$ to $0.25$ induces fluctuations of less than $1\%$. This robustness reflects the underlying stability of coarse-grained geometric priors: once the dominant eigen-directions of the curvature are captured from public data, the precise details of the dynamic clamping schedule exert only a marginal influence on optimization dynamics. The high tolerance to these hyperparameters significantly reduces the tuning burden in practical deployments.

\section{Conclusion}
\label{sec:conclusion}

We proposed DP-NGD, a practical framework that enables effective second-order optimization under differential privacy. By decoupling curvature estimation from private data, reconciling isotropic DP operations with anisotropic natural gradient updates through a whitened-space mechanism, and dynamically clamping curvature eigenvalues, DP-NGD systematically resolves the three fundamental obstacles that have so far prevented the integration of NGD and DP. Extensive experiments demonstrate that DP-NGD consistently achieves state-of-the-art accuracy across multiple benchmarks and privacy budgets, breaking through the utility ceilings of first-order baselines with up to $10\times$ convergence speedup. Notably, these gains are obtained with a trivially small public auxiliary dataset and without incurring additional privacy cost. Our work opens up several promising directions, including extending coarse-grained curvature priors to large-scale models, relaxing the reliance on public data through fully private geometric estimation, and exploring curvature-aware optimization under other privacy notions.

\appendix

\section{Derivation of the Whitening Matrix $W = F^{-1/2}$}
\label{app:whitening_derivation}

To reconcile this conflict between DP and NGD, we exploit two critical property: (1) the DP privacy guarantee holds strictly as long as the sensitivity is bounded under the $L_2$ norm in \textit{any} coordinate space; (2) The acceleration of NGD requires only the final parameter updates to align with the local loss geometry. This insight motivates a natural strategy: transfer all DP operations into an intermediate projected space where the isotropic $L_2$ constraint can be safely applied, and then map the result back to the original parameter space so that the final update recovers the curvature-aligned geometry of NGD.

We now formalize the two geometric requirements. First, NGD constrains the per-step distributional shift via the KL divergence. Under the standard second-order Taylor expansion and the NGD update rule $\Delta\theta = -\eta F^{-1} g$, the KL constraint reduces to a bound on the \textit{Fisher norm} of the gradient:
\begin{equation}\label{eq:fisher_norm}
    \|g\|_{F^{-1}}^2 = g^\top F^{-1} g.
\end{equation}

Second, DP requires that the update vector used for clipping and noising has a bounded $L_2$ norm. In a projected space defined by a linear map $W$, the $L_2$ norm of the projected gradient is
\begin{equation}\label{eq:projected_norm}
    \|W g\|_2^2 = g^\top (W^\top W) g.
\end{equation}

To guarantee that the isotropic DP operations in the projected space do not corrupt the curvature-aware information, the $L_2$ geometry of the projected space must be intrinsically equivalent to the KL geometry of the original space. Equating the quadratic forms in Eqs.~\eqref{eq:fisher_norm} and~\eqref{eq:projected_norm} yields the fundamental condition
\begin{equation}\label{eq:condition}
    W^\top W = F^{-1}.
\end{equation}
This equation demands a matrix $W$ whose symmetric product recovers the inverse Fisher. Since the Fisher matrix $F$ and its inverse $F^{-1}$ are symmetric and positive semi-definite, the unique symmetric positive semi-definite solution to Eq.~\eqref{eq:condition} is the inverse square root of $F$:
\begin{equation}\label{eq:solution}
    W = F^{-1/2}.
\end{equation}

The derivation reveals why the $F^{-1/2}$-whitened space is the unique geometry that aligns DP and NGD. In this space, bounding the $L_2$ norm of the projected gradient $\|F^{-1/2} g\|_2$ is mathematically identical to bounding the Fisher norm $\|g\|_{F^{-1}}$, which in turn is equivalent to respecting the NGD trust region. Consequently, performing standard isotropic DP clipping and noising in the $F^{-1/2}$-whitened space, and then mapping back via $F^{1/2}$, automatically yields an anisotropic, curvature-aware update in the original parameter space. This provides the theoretical foundation for the KL-DP duality.

\paragraph{Efficient computation via K-FAC.}
In deep networks, constructing the full $F^{-1/2}$ would be prohibitive. The Kronecker-Factored Approximate Curvature (K-FAC) resolves this by factorizing each layer's Fisher block as $F_l \approx A_l \otimes G_l$. The whitening matrix can then be efficiently computed via eigendecomposition of the small Kronecker factors:
\[
    F_l^{-1/2} \approx (Q_A \Lambda_A^{-1/2} Q_A^\top) \otimes (Q_G \Lambda_G^{-1/2} Q_G^\top),
\]
reducing the complexity from $\mathcal{O}(d^3)$ to $\mathcal{O}(d_{\text{in}}^3 + d_{\text{out}}^3)$, where $d_{\text{in}}, d_{\text{out}} \ll d$. This makes the whitened-space update computationally tractable.

\section{Optimal Hyperparameter Configurations}
\label{app:optimal_configs}

Table~\ref{tab:optimal_configs} lists the best hyperparameter configurations identified via grid search for DP-NGD across all datasets and privacy budgets reported in the main text. These settings correspond to the results presented in Table~\ref{table:utility_evaluation}.

\begin{table}[ht]
\centering
\caption{Optimal learning rates and total training steps for DP-NGD. All configurations use a fixed batch size of $B=4096$, curvature update interval $T_{\text{cur}}=8$, warmup fraction $T_1=0.1$, polynomial exponent $p=10$, and auxiliary public dataset size $|\mathcal{D}_{\text{pub}}|=500$.}
\label{tab:optimal_configs}
\begin{tabular}{@{}l c c c@{}}
\toprule
\textbf{Dataset (Model)} & \textbf{$\epsilon$} & \textbf{Total Steps $T$} & \textbf{Learning Rate $\eta$} \\
\midrule
\multirow{5}{*}{CIFAR-10 (WRN-16-4)} 
& 1.0 & 840 & 0.01 \\
& 2.0 & 1320 & 0.01 \\
& 4.0 & 1920 & 0.01 \\
& 6.0 & 1800 & 0.02 \\
& 8.0 & 2640 & 0.02 \\
\midrule
\multirow{5}{*}{SVHN (WRN-16-4)} 
& 1.0 & 1360 & 0.01 \\
& 2.0 & 1870 & 0.01 \\
& 4.0 & 2040 & 0.01 \\
& 6.0 & 2380 & 0.02 \\
& 8.0 & 3400 & 0.02 \\
\midrule
\multirow{5}{*}{UTKFace (ResNet-20)} 
& 1.0 & 550 & 0.01 \\
& 2.0 & 850 & 0.01 \\
& 4.0 & 900 & 0.02 \\
& 6.0 & 1100 & 0.02 \\
& 8.0 & 1700 & 0.02 \\
\bottomrule
\end{tabular}
\end{table}

\clearpage

\bibliographystyle{ACM-Reference-Format}
\bibliography{sample}


\begin{thebibliography}{45}


\ifx \showCODEN    \undefined \def \showCODEN     #1{\unskip}     \fi
\ifx \showDOI      \undefined \def \showDOI       #1{#1}\fi
\ifx \showISBNx    \undefined \def \showISBNx     #1{\unskip}     \fi
\ifx \showISBNxiii \undefined \def \showISBNxiii  #1{\unskip}     \fi
\ifx \showISSN     \undefined \def \showISSN      #1{\unskip}     \fi
\ifx \showLCCN     \undefined \def \showLCCN      #1{\unskip}     \fi
\ifx \shownote     \undefined \def \shownote      #1{#1}          \fi
\ifx \showarticletitle \undefined \def \showarticletitle #1{#1}   \fi
\ifx \showURL      \undefined \def \showURL       {\relax}        \fi
\providecommand\bibfield[2]{#2}
\providecommand\bibinfo[2]{#2}
\providecommand\natexlab[1]{#1}
\providecommand\showeprint[2][]{arXiv:#2}

\bibitem[\protect\citeauthoryear{Abadi, Chu, Goodfellow, McMahan, Mironov, Talwar, and Zhang}{Abadi et~al\mbox{.}}{2016}]%
        {dpsgd1}
\bibfield{author}{\bibinfo{person}{Mart{\'i}n Abadi}, \bibinfo{person}{Andy Chu}, \bibinfo{person}{Ian~J. Goodfellow}, \bibinfo{person}{H.~B. McMahan}, \bibinfo{person}{Ilya Mironov}, \bibinfo{person}{Kunal Talwar}, {and} \bibinfo{person}{Li Zhang}.} \bibinfo{year}{2016}\natexlab{}.
\newblock \showarticletitle{Deep Learning with Differential Privacy}.
\newblock \bibinfo{journal}{\emph{Proceedings of the 2016 ACM SIGSAC Conference on Computer and Communications Security}} (\bibinfo{year}{2016}).
\newblock
\urldef\tempurl%
\url{https://api.semanticscholar.org/CorpusID:207241585}
\showURL{%
\tempurl}


\bibitem[\protect\citeauthoryear{Abouelnaga, Ali, Rady, and Moustafa}{Abouelnaga et~al\mbox{.}}{2016}]%
        {cifar10}
\bibfield{author}{\bibinfo{person}{Yehya Abouelnaga}, \bibinfo{person}{Ola~S Ali}, \bibinfo{person}{Hager Rady}, {and} \bibinfo{person}{Mohamed Moustafa}.} \bibinfo{year}{2016}\natexlab{}.
\newblock \showarticletitle{Cifar-10: Knn-based ensemble of classifiers}. In \bibinfo{booktitle}{\emph{2016 International Conference on Computational Science and Computational Intelligence (CSCI)}}. IEEE, \bibinfo{pages}{1192--1195}.
\newblock


\bibitem[\protect\citeauthoryear{Ahn and Cutkosky}{Ahn and Cutkosky}{2024}]%
        {ema222}
\bibfield{author}{\bibinfo{person}{Kwangjun Ahn} {and} \bibinfo{person}{Ashok Cutkosky}.} \bibinfo{year}{2024}\natexlab{}.
\newblock \showarticletitle{Adam with model exponential moving average is effective for nonconvex optimization}.
\newblock \bibinfo{journal}{\emph{ArXiv}}  \bibinfo{volume}{abs/2405.18199} (\bibinfo{year}{2024}).
\newblock
\urldef\tempurl%
\url{https://api.semanticscholar.org/CorpusID:270068108}
\showURL{%
\tempurl}


\bibitem[\protect\citeauthoryear{Babu, Vinothini, Solaimalai, Vanitha, Yadav, Sukania, Vijayan, and Srinivasan}{Babu et~al\mbox{.}}{2024}]%
        {Groupnorm222}
\bibfield{author}{\bibinfo{person}{Bachina~Harish Babu}, \bibinfo{person}{V.~R. Vinothini}, \bibinfo{person}{Gautam Solaimalai}, \bibinfo{person}{R. Vanitha}, \bibinfo{person}{Ajay~Singh Yadav}, \bibinfo{person}{P. Sukania}, \bibinfo{person}{Vishal Vijayan}, {and} \bibinfo{person}{Revanth Srinivasan}.} \bibinfo{year}{2024}\natexlab{}.
\newblock \showarticletitle{Theoretical optimization of group size in group normalization for enhanced deep neural network training}.
\newblock \bibinfo{journal}{\emph{AIP Conference Proceedings}} (\bibinfo{year}{2024}).
\newblock
\urldef\tempurl%
\url{https://api.semanticscholar.org/CorpusID:274008853}
\showURL{%
\tempurl}


\bibitem[\protect\citeauthoryear{Bao, Pittaluga, Kumar, and Bindschaedler}{Bao et~al\mbox{.}}{2023}]%
        {data_augmentation111}
\bibfield{author}{\bibinfo{person}{Wenxuan Bao}, \bibinfo{person}{F. Pittaluga}, \bibinfo{person}{Vijay Kumar}, {and} \bibinfo{person}{Vincent Bindschaedler}.} \bibinfo{year}{2023}\natexlab{}.
\newblock \showarticletitle{DP-Mix: Mixup-based Data Augmentation for Differentially Private Learning}.
\newblock \bibinfo{journal}{\emph{ArXiv}}  \bibinfo{volume}{abs/2311.01295} (\bibinfo{year}{2023}).
\newblock
\urldef\tempurl%
\url{https://api.semanticscholar.org/CorpusID:264935065}
\showURL{%
\tempurl}


\bibitem[\protect\citeauthoryear{Block, Bun, Desai, Shetty, and Wu}{Block et~al\mbox{.}}{2024}]%
        {dp_public_data1}
\bibfield{author}{\bibinfo{person}{Adam Block}, \bibinfo{person}{Mark Bun}, \bibinfo{person}{Rathin Desai}, \bibinfo{person}{Abhishek Shetty}, {and} \bibinfo{person}{Zhiwei~Steven Wu}.} \bibinfo{year}{2024}\natexlab{}.
\newblock \showarticletitle{Oracle-Efficient Differentially Private Learning with Public Data}.
\newblock \bibinfo{journal}{\emph{ArXiv}}  \bibinfo{volume}{abs/2402.09483} (\bibinfo{year}{2024}).
\newblock
\urldef\tempurl%
\url{https://api.semanticscholar.org/CorpusID:267682343}
\showURL{%
\tempurl}


\bibitem[\protect\citeauthoryear{Boix-Adser{\`a}}{Boix-Adser{\`a}}{2025}]%
        {inductive_prior2}
\bibfield{author}{\bibinfo{person}{Enric Boix-Adser{\`a}}.} \bibinfo{year}{2025}\natexlab{}.
\newblock \showarticletitle{On the inductive bias of infinite-depth ResNets and the bottleneck rank}.
\newblock \bibinfo{journal}{\emph{ArXiv}}  \bibinfo{volume}{abs/2501.19149} (\bibinfo{year}{2025}).
\newblock
\urldef\tempurl%
\url{https://api.semanticscholar.org/CorpusID:276079304}
\showURL{%
\tempurl}


\bibitem[\protect\citeauthoryear{Bu, Wang, Zha, and Karypis}{Bu et~al\mbox{.}}{2022}]%
        {dpsgd2}
\bibfield{author}{\bibinfo{person}{Zhiqi Bu}, \bibinfo{person}{Yu-Xiang Wang}, \bibinfo{person}{Sheng Zha}, {and} \bibinfo{person}{George Karypis}.} \bibinfo{year}{2022}\natexlab{}.
\newblock \showarticletitle{Automatic Clipping: Differentially Private Deep Learning Made Easier and Stronger}.
\newblock \bibinfo{journal}{\emph{ArXiv}}  \bibinfo{volume}{abs/2206.07136} (\bibinfo{year}{2022}).
\newblock
\urldef\tempurl%
\url{https://api.semanticscholar.org/CorpusID:249674655}
\showURL{%
\tempurl}


\bibitem[\protect\citeauthoryear{Chandaliya, Kumar, Harjani, and Nain}{Chandaliya et~al\mbox{.}}{2019}]%
        {utkface}
\bibfield{author}{\bibinfo{person}{Praveen~Kumar Chandaliya}, \bibinfo{person}{V.Chandra Kumar}, \bibinfo{person}{Mayank Harjani}, {and} \bibinfo{person}{Neeta Nain}.} \bibinfo{year}{2019}\natexlab{}.
\newblock \showarticletitle{SCDAE: Ethnicity and Gender Alteration on CLF and UTKFace Dataset}. In \bibinfo{booktitle}{\emph{International Conference on Computer Vision and Image Processing}}.
\newblock
\urldef\tempurl%
\url{https://api.semanticscholar.org/CorpusID:214730428}
\showURL{%
\tempurl}


\bibitem[\protect\citeauthoryear{Chaudhari, Choromańska, Soatto, LeCun, Baldassi, Borgs, Chayes, Sagun, and Zecchina}{Chaudhari et~al\mbox{.}}{2016}]%
        {flat_direction1}
\bibfield{author}{\bibinfo{person}{Pratik Chaudhari}, \bibinfo{person}{Anna Choromańska}, \bibinfo{person}{Stefano Soatto}, \bibinfo{person}{Yann LeCun}, \bibinfo{person}{Carlo Baldassi}, \bibinfo{person}{Christian Borgs}, \bibinfo{person}{Jennifer~Tour Chayes}, \bibinfo{person}{Levent Sagun}, {and} \bibinfo{person}{Riccardo Zecchina}.} \bibinfo{year}{2016}\natexlab{}.
\newblock \showarticletitle{Entropy-SGD: biasing gradient descent into wide valleys}.
\newblock \bibinfo{journal}{\emph{Journal of Statistical Mechanics: Theory and Experiment}}  \bibinfo{volume}{2019} (\bibinfo{year}{2016}).
\newblock
\urldef\tempurl%
\url{https://api.semanticscholar.org/CorpusID:13807351}
\showURL{%
\tempurl}


\bibitem[\protect\citeauthoryear{Dangel, M{\"u}ller, and Zeinhofer}{Dangel et~al\mbox{.}}{2024}]%
        {kfac3}
\bibfield{author}{\bibinfo{person}{Felix Dangel}, \bibinfo{person}{Johannes M{\"u}ller}, {and} \bibinfo{person}{Marius Zeinhofer}.} \bibinfo{year}{2024}\natexlab{}.
\newblock \showarticletitle{Kronecker-factored approximate curvature for physics-informed neural networks}.
\newblock \bibinfo{journal}{\emph{Advances in Neural Information Processing Systems}}  \bibinfo{volume}{37} (\bibinfo{year}{2024}), \bibinfo{pages}{34582--34636}.
\newblock


\bibitem[\protect\citeauthoryear{De, Berrada, Hayes, Smith, and Balle}{De et~al\mbox{.}}{2022}]%
        {deepmind}
\bibfield{author}{\bibinfo{person}{Soham De}, \bibinfo{person}{Leonard Berrada}, \bibinfo{person}{Jamie Hayes}, \bibinfo{person}{Samuel~L. Smith}, {and} \bibinfo{person}{Borja Balle}.} \bibinfo{year}{2022}\natexlab{}.
\newblock \showarticletitle{Unlocking High-Accuracy Differentially Private Image Classification through Scale}.
\newblock \bibinfo{journal}{\emph{ArXiv}}  \bibinfo{volume}{abs/2204.13650} (\bibinfo{year}{2022}).
\newblock
\urldef\tempurl%
\url{https://api.semanticscholar.org/CorpusID:248427073}
\showURL{%
\tempurl}


\bibitem[\protect\citeauthoryear{Dong, Liang, and Yi}{Dong et~al\mbox{.}}{2022}]%
        {Dong_dp}
\bibfield{author}{\bibinfo{person}{Wei Dong}, \bibinfo{person}{Yuting Liang}, {and} \bibinfo{person}{Ke Yi}.} \bibinfo{year}{2022}\natexlab{}.
\newblock \showarticletitle{Differentially Private Covariance Revisited}.
\newblock \bibinfo{journal}{\emph{ArXiv}}  \bibinfo{volume}{abs/2205.14324} (\bibinfo{year}{2022}).
\newblock
\urldef\tempurl%
\url{https://api.semanticscholar.org/CorpusID:249191345}
\showURL{%
\tempurl}


\bibitem[\protect\citeauthoryear{Dwork}{Dwork}{2006}]%
        {dp1}
\bibfield{author}{\bibinfo{person}{Cynthia Dwork}.} \bibinfo{year}{2006}\natexlab{}.
\newblock \showarticletitle{Differential Privacy}. In \bibinfo{booktitle}{\emph{International Colloquium on Automata, Languages and Programming}}.
\newblock
\urldef\tempurl%
\url{https://api.semanticscholar.org/CorpusID:2565493}
\showURL{%
\tempurl}


\bibitem[\protect\citeauthoryear{Dwork and Roth}{Dwork and Roth}{2014}]%
        {dp2}
\bibfield{author}{\bibinfo{person}{Cynthia Dwork} {and} \bibinfo{person}{Aaron Roth}.} \bibinfo{year}{2014}\natexlab{}.
\newblock \showarticletitle{The Algorithmic Foundations of Differential Privacy}.
\newblock \bibinfo{journal}{\emph{Found. Trends Theor. Comput. Sci.}}  \bibinfo{volume}{9} (\bibinfo{year}{2014}), \bibinfo{pages}{211--407}.
\newblock
\urldef\tempurl%
\url{https://api.semanticscholar.org/CorpusID:207178262}
\showURL{%
\tempurl}


\bibitem[\protect\citeauthoryear{Feng, Mohammady, Hong, Yan, Kundu, Wang, and Hong}{Feng et~al\mbox{.}}{2024}]%
        {moments_account2}
\bibfield{author}{\bibinfo{person}{Shuya Feng}, \bibinfo{person}{Meisam Mohammady}, \bibinfo{person}{Hanbin Hong}, \bibinfo{person}{Shenao Yan}, \bibinfo{person}{Ashish Kundu}, \bibinfo{person}{Binghui Wang}, {and} \bibinfo{person}{Yuan Hong}.} \bibinfo{year}{2024}\natexlab{}.
\newblock \showarticletitle{Harmonizing Differential Privacy Mechanisms for Federated Learning: Boosting Accuracy and Convergence}.
\newblock \bibinfo{journal}{\emph{Proceedings of the Fifteenth ACM Conference on Data and Application Security and Privacy}} (\bibinfo{year}{2024}).
\newblock
\urldef\tempurl%
\url{https://api.semanticscholar.org/CorpusID:278238496}
\showURL{%
\tempurl}


\bibitem[\protect\citeauthoryear{Foret, Kleiner, Mobahi, and Neyshabur}{Foret et~al\mbox{.}}{2020}]%
        {flat_direction2}
\bibfield{author}{\bibinfo{person}{Pierre Foret}, \bibinfo{person}{Ariel Kleiner}, \bibinfo{person}{Hossein Mobahi}, {and} \bibinfo{person}{Behnam Neyshabur}.} \bibinfo{year}{2020}\natexlab{}.
\newblock \showarticletitle{Sharpness-Aware Minimization for Efficiently Improving Generalization}.
\newblock \bibinfo{journal}{\emph{ArXiv}}  \bibinfo{volume}{abs/2010.01412} (\bibinfo{year}{2020}).
\newblock
\urldef\tempurl%
\url{https://api.semanticscholar.org/CorpusID:222134093}
\showURL{%
\tempurl}


\bibitem[\protect\citeauthoryear{Gopi, Lee, and Wutschitz}{Gopi et~al\mbox{.}}{2021}]%
        {prv_account1}
\bibfield{author}{\bibinfo{person}{Sivakanth Gopi}, \bibinfo{person}{Yin~Tat Lee}, {and} \bibinfo{person}{Lukas Wutschitz}.} \bibinfo{year}{2021}\natexlab{}.
\newblock \showarticletitle{Numerical Composition of Differential Privacy}. In \bibinfo{booktitle}{\emph{Neural Information Processing Systems}}.
\newblock
\urldef\tempurl%
\url{https://api.semanticscholar.org/CorpusID:235358383}
\showURL{%
\tempurl}


\bibitem[\protect\citeauthoryear{Greenacre, Groenen, Hastie, D’Enza, Markos, and Tuzhilina}{Greenacre et~al\mbox{.}}{2003}]%
        {pca222}
\bibfield{author}{\bibinfo{person}{Michael Greenacre}, \bibinfo{person}{Patrick J.~F. Groenen}, \bibinfo{person}{Trevor~J. Hastie}, \bibinfo{person}{Alfonso~Iodice D’Enza}, \bibinfo{person}{Angelos~I. Markos}, {and} \bibinfo{person}{Elena Tuzhilina}.} \bibinfo{year}{2003}\natexlab{}.
\newblock \showarticletitle{Principal Component Analysis}.
\newblock \bibinfo{journal}{\emph{Technometrics}}  \bibinfo{volume}{45} (\bibinfo{year}{2003}), \bibinfo{pages}{276 -- 276}.
\newblock
\urldef\tempurl%
\url{https://api.semanticscholar.org/CorpusID:2534141}
\showURL{%
\tempurl}


\bibitem[\protect\citeauthoryear{Grosse and Martens}{Grosse and Martens}{2016}]%
        {kfac2}
\bibfield{author}{\bibinfo{person}{Roger Grosse} {and} \bibinfo{person}{James Martens}.} \bibinfo{year}{2016}\natexlab{}.
\newblock \showarticletitle{A kronecker-factored approximate fisher matrix for convolution layers}. In \bibinfo{booktitle}{\emph{International Conference on Machine Learning}}. PMLR, \bibinfo{pages}{573--582}.
\newblock


\bibitem[\protect\citeauthoryear{Guth and M'enard}{Guth and M'enard}{2024}]%
        {low_level_statistic1}
\bibfield{author}{\bibinfo{person}{Florentin Guth} {and} \bibinfo{person}{Brice M'enard}.} \bibinfo{year}{2024}\natexlab{}.
\newblock \showarticletitle{On the universality of neural encodings in CNNs}.
\newblock \bibinfo{journal}{\emph{ArXiv}}  \bibinfo{volume}{abs/2409.19460} (\bibinfo{year}{2024}).
\newblock
\urldef\tempurl%
\url{https://api.semanticscholar.org/CorpusID:272987466}
\showURL{%
\tempurl}


\bibitem[\protect\citeauthoryear{Huo, Fan, Li, Chen, Gao, and Li}{Huo et~al\mbox{.}}{2020}]%
        {resnet20}
\bibfield{author}{\bibinfo{person}{Tianjiao Huo}, \bibinfo{person}{Jiaqi Fan}, \bibinfo{person}{Xin Li}, \bibinfo{person}{Hong Chen}, \bibinfo{person}{Bingzhao Gao}, {and} \bibinfo{person}{Xuesong Li}.} \bibinfo{year}{2020}\natexlab{}.
\newblock \showarticletitle{Traffic Sign Recognition Based on ResNet-20 and Deep Mutual Learning}.
\newblock \bibinfo{journal}{\emph{2020 Chinese Automation Congress (CAC)}} (\bibinfo{year}{2020}), \bibinfo{pages}{4770--4774}.
\newblock
\urldef\tempurl%
\url{https://api.semanticscholar.org/CorpusID:231737679}
\showURL{%
\tempurl}


\bibitem[\protect\citeauthoryear{Jolliffe and Cadima}{Jolliffe and Cadima}{2016}]%
        {pca111}
\bibfield{author}{\bibinfo{person}{Ian~T. Jolliffe} {and} \bibinfo{person}{Jorge Cadima}.} \bibinfo{year}{2016}\natexlab{}.
\newblock \showarticletitle{Principal component analysis: a review and recent developments}.
\newblock \bibinfo{journal}{\emph{Philosophical Transactions of the Royal Society A: Mathematical, Physical and Engineering Sciences}}  \bibinfo{volume}{374} (\bibinfo{year}{2016}).
\newblock
\urldef\tempurl%
\url{https://api.semanticscholar.org/CorpusID:20101754}
\showURL{%
\tempurl}


\bibitem[\protect\citeauthoryear{Kudo and Tajima}{Kudo and Tajima}{2022}]%
        {fim1}
\bibfield{author}{\bibinfo{person}{Daigo Kudo} {and} \bibinfo{person}{Hiroyasu Tajima}.} \bibinfo{year}{2022}\natexlab{}.
\newblock \showarticletitle{Fisher information matrix as a resource measure in the resource theory of asymmetry with general connected-Lie-group symmetry}.
\newblock \bibinfo{journal}{\emph{Physical Review A}} (\bibinfo{year}{2022}).
\newblock
\urldef\tempurl%
\url{https://api.semanticscholar.org/CorpusID:248562504}
\showURL{%
\tempurl}


\bibitem[\protect\citeauthoryear{Kunstner, Hennig, and Balles}{Kunstner et~al\mbox{.}}{2019}]%
        {natural_gradient2}
\bibfield{author}{\bibinfo{person}{Frederik Kunstner}, \bibinfo{person}{Philipp Hennig}, {and} \bibinfo{person}{Lukas Balles}.} \bibinfo{year}{2019}\natexlab{}.
\newblock \showarticletitle{Limitations of the empirical fisher approximation for natural gradient descent}.
\newblock \bibinfo{journal}{\emph{Advances in neural information processing systems}}  \bibinfo{volume}{32} (\bibinfo{year}{2019}).
\newblock


\bibitem[\protect\citeauthoryear{Li, Zaheer, Reddi, and Smith}{Li et~al\mbox{.}}{2022}]%
        {adadps}
\bibfield{author}{\bibinfo{person}{Tian Li}, \bibinfo{person}{Manzil Zaheer}, \bibinfo{person}{Sashank~J. Reddi}, {and} \bibinfo{person}{Virginia Smith}.} \bibinfo{year}{2022}\natexlab{}.
\newblock \showarticletitle{Private Adaptive Optimization with Side Information}. In \bibinfo{booktitle}{\emph{International Conference on Machine Learning}}.
\newblock
\urldef\tempurl%
\url{https://api.semanticscholar.org/CorpusID:246823521}
\showURL{%
\tempurl}


\bibitem[\protect\citeauthoryear{Lowy, Li, Huang, and Razaviyayn}{Lowy et~al\mbox{.}}{2023}]%
        {dp_public_data2}
\bibfield{author}{\bibinfo{person}{Andrew Lowy}, \bibinfo{person}{Zeman Li}, \bibinfo{person}{Tianjian Huang}, {and} \bibinfo{person}{Meisam Razaviyayn}.} \bibinfo{year}{2023}\natexlab{}.
\newblock \showarticletitle{Optimal Differentially Private Learning with Public Data}.
\newblock \bibinfo{journal}{\emph{ArXiv}}  \bibinfo{volume}{abs/2306.15056} (\bibinfo{year}{2023}).
\newblock
\urldef\tempurl%
\url{https://api.semanticscholar.org/CorpusID:275750699}
\showURL{%
\tempurl}


\bibitem[\protect\citeauthoryear{Madhulika and Sampath}{Madhulika and Sampath}{2022}]%
        {svhn}
\bibfield{author}{\bibinfo{person}{PSS Madhulika} {and} \bibinfo{person}{Nalini Sampath}.} \bibinfo{year}{2022}\natexlab{}.
\newblock \showarticletitle{An application of normalizer free neural networks on the SVHN dataset}. In \bibinfo{booktitle}{\emph{2022 International conference on applied artificial intelligence and computing (ICAAIC)}}. IEEE, \bibinfo{pages}{238--242}.
\newblock


\bibitem[\protect\citeauthoryear{Martens}{Martens}{2020}]%
        {natural_gradient}
\bibfield{author}{\bibinfo{person}{James Martens}.} \bibinfo{year}{2020}\natexlab{}.
\newblock \showarticletitle{New insights and perspectives on the natural gradient method}.
\newblock \bibinfo{journal}{\emph{Journal of Machine Learning Research}} \bibinfo{volume}{21}, \bibinfo{number}{146} (\bibinfo{year}{2020}), \bibinfo{pages}{1--76}.
\newblock


\bibitem[\protect\citeauthoryear{Mehmeti-G{\"o}pel and Wand}{Mehmeti-G{\"o}pel and Wand}{2025}]%
        {inductive_prior1}
\bibfield{author}{\bibinfo{person}{Christian H. X.~Ali Mehmeti-G{\"o}pel} {and} \bibinfo{person}{Michael Wand}.} \bibinfo{year}{2025}\natexlab{}.
\newblock \showarticletitle{ResNets Are Deeper Than You Think}.
\newblock \bibinfo{journal}{\emph{ArXiv}}  \bibinfo{volume}{abs/2506.14386} (\bibinfo{year}{2025}).
\newblock
\urldef\tempurl%
\url{https://api.semanticscholar.org/CorpusID:279410014}
\showURL{%
\tempurl}


\bibitem[\protect\citeauthoryear{Meiser and Mohammadi}{Meiser and Mohammadi}{2018}]%
        {prv_account2}
\bibfield{author}{\bibinfo{person}{Sebastian Meiser} {and} \bibinfo{person}{Esfandiar Mohammadi}.} \bibinfo{year}{2018}\natexlab{}.
\newblock \showarticletitle{Tight on Budget?: Tight Bounds for r-Fold Approximate Differential Privacy}.
\newblock \bibinfo{journal}{\emph{Proceedings of the 2018 ACM SIGSAC Conference on Computer and Communications Security}} (\bibinfo{year}{2018}).
\newblock
\urldef\tempurl%
\url{https://api.semanticscholar.org/CorpusID:52504941}
\showURL{%
\tempurl}


\bibitem[\protect\citeauthoryear{Mironov}{Mironov}{2017}]%
        {rdp1}
\bibfield{author}{\bibinfo{person}{Ilya Mironov}.} \bibinfo{year}{2017}\natexlab{}.
\newblock \showarticletitle{R{\'e}nyi Differential Privacy}.
\newblock \bibinfo{journal}{\emph{2017 IEEE 30th Computer Security Foundations Symposium (CSF)}} (\bibinfo{year}{2017}), \bibinfo{pages}{263--275}.
\newblock
\urldef\tempurl%
\url{https://api.semanticscholar.org/CorpusID:9386213}
\showURL{%
\tempurl}


\bibitem[\protect\citeauthoryear{Mironov, Talwar, and Zhang}{Mironov et~al\mbox{.}}{2019}]%
        {rdp2}
\bibfield{author}{\bibinfo{person}{Ilya Mironov}, \bibinfo{person}{Kunal Talwar}, {and} \bibinfo{person}{Li Zhang}.} \bibinfo{year}{2019}\natexlab{}.
\newblock \showarticletitle{R{\'e}nyi Differential Privacy of the Sampled Gaussian Mechanism}.
\newblock \bibinfo{journal}{\emph{ArXiv}}  \bibinfo{volume}{abs/1908.10530} (\bibinfo{year}{2019}).
\newblock
\urldef\tempurl%
\url{https://api.semanticscholar.org/CorpusID:201660159}
\showURL{%
\tempurl}


\bibitem[\protect\citeauthoryear{Morales-Brotons, Vogels, and Hendrikx}{Morales-Brotons et~al\mbox{.}}{2024}]%
        {ema111}
\bibfield{author}{\bibinfo{person}{Daniel Morales-Brotons}, \bibinfo{person}{Thijs Vogels}, {and} \bibinfo{person}{Hadrien Hendrikx}.} \bibinfo{year}{2024}\natexlab{}.
\newblock \showarticletitle{Exponential Moving Average of Weights in Deep Learning: Dynamics and Benefits}.
\newblock \bibinfo{journal}{\emph{Trans. Mach. Learn. Res.}}  \bibinfo{volume}{2024} (\bibinfo{year}{2024}).
\newblock
\urldef\tempurl%
\url{https://api.semanticscholar.org/CorpusID:270355881}
\showURL{%
\tempurl}


\bibitem[\protect\citeauthoryear{Pajarinen, Thai, Akrour, Peters, and Neumann}{Pajarinen et~al\mbox{.}}{2019}]%
        {kl_trust_region2}
\bibfield{author}{\bibinfo{person}{Joni Pajarinen}, \bibinfo{person}{Hong~Linh Thai}, \bibinfo{person}{Riad Akrour}, \bibinfo{person}{Jan Peters}, {and} \bibinfo{person}{Gerhard Neumann}.} \bibinfo{year}{2019}\natexlab{}.
\newblock \showarticletitle{Compatible natural gradient policy search}.
\newblock \bibinfo{journal}{\emph{Machine Learning}}  \bibinfo{volume}{108} (\bibinfo{year}{2019}), \bibinfo{pages}{1443 -- 1466}.
\newblock
\urldef\tempurl%
\url{https://api.semanticscholar.org/CorpusID:59842948}
\showURL{%
\tempurl}


\bibitem[\protect\citeauthoryear{Rodriguez, Farr, Farr, and Mandel}{Rodriguez et~al\mbox{.}}{2013}]%
        {fim2}
\bibfield{author}{\bibinfo{person}{Carl~L Rodriguez}, \bibinfo{person}{Benjamin Farr}, \bibinfo{person}{Will~M Farr}, {and} \bibinfo{person}{Ilya Mandel}.} \bibinfo{year}{2013}\natexlab{}.
\newblock \showarticletitle{Inadequacies of the Fisher information matrix in gravitational-wave parameter estimation}.
\newblock \bibinfo{journal}{\emph{Physical Review D—Particles, Fields, Gravitation, and Cosmology}} \bibinfo{volume}{88}, \bibinfo{number}{8} (\bibinfo{year}{2013}), \bibinfo{pages}{084013}.
\newblock


\bibitem[\protect\citeauthoryear{Tang, Panda, Sehwag, and Mittal}{Tang et~al\mbox{.}}{2023}]%
        {dp_randp_wrn-16-4}
\bibfield{author}{\bibinfo{person}{Xinyu Tang}, \bibinfo{person}{Ashwinee Panda}, \bibinfo{person}{Vikash Sehwag}, {and} \bibinfo{person}{Prateek Mittal}.} \bibinfo{year}{2023}\natexlab{}.
\newblock \showarticletitle{Differentially Private Image Classification by Learning Priors from Random Processes}.
\newblock \bibinfo{journal}{\emph{J. Priv. Confidentiality}}  \bibinfo{volume}{15} (\bibinfo{year}{2023}).
\newblock
\urldef\tempurl%
\url{https://api.semanticscholar.org/CorpusID:259129508}
\showURL{%
\tempurl}


\bibitem[\protect\citeauthoryear{Venkatesan, Gattupalli, and Li}{Venkatesan et~al\mbox{.}}{2016}]%
        {low_level_statistic2}
\bibfield{author}{\bibinfo{person}{Ragav Venkatesan}, \bibinfo{person}{Vijetha Gattupalli}, {and} \bibinfo{person}{Baoxin Li}.} \bibinfo{year}{2016}\natexlab{}.
\newblock \showarticletitle{On the generality of neural image features}.
\newblock \bibinfo{journal}{\emph{2016 IEEE International Conference on Image Processing (ICIP)}} (\bibinfo{year}{2016}), \bibinfo{pages}{41--45}.
\newblock
\urldef\tempurl%
\url{https://api.semanticscholar.org/CorpusID:9264808}
\showURL{%
\tempurl}


\bibitem[\protect\citeauthoryear{Wang, yang Gao, Zhang, Shen, and Su}{Wang et~al\mbox{.}}{2022}]%
        {privacy_account1}
\bibfield{author}{\bibinfo{person}{Hua Wang}, \bibinfo{person}{Sheng yang Gao}, \bibinfo{person}{Huanyu Zhang}, \bibinfo{person}{Milan Shen}, {and} \bibinfo{person}{Weijie Su}.} \bibinfo{year}{2022}\natexlab{}.
\newblock \showarticletitle{Edgeworth Accountant: An Analytical Approach to Differential Privacy Composition}.
\newblock \bibinfo{journal}{\emph{J. Amer. Statist. Assoc.}} (\bibinfo{year}{2022}).
\newblock
\urldef\tempurl%
\url{https://api.semanticscholar.org/CorpusID:249538149}
\showURL{%
\tempurl}


\bibitem[\protect\citeauthoryear{Wang, Balle, and Kasiviswanathan}{Wang et~al\mbox{.}}{2018}]%
        {moments_account1}
\bibfield{author}{\bibinfo{person}{Yu-Xiang Wang}, \bibinfo{person}{Borja Balle}, {and} \bibinfo{person}{Shiva~Prasad Kasiviswanathan}.} \bibinfo{year}{2018}\natexlab{}.
\newblock \showarticletitle{Subsampled R{\'e}nyi Differential Privacy and Analytical Moments Accountant}. In \bibinfo{booktitle}{\emph{International Conference on Artificial Intelligence and Statistics}}.
\newblock
\urldef\tempurl%
\url{https://api.semanticscholar.org/CorpusID:51893629}
\showURL{%
\tempurl}


\bibitem[\protect\citeauthoryear{Wu and He}{Wu and He}{2018}]%
        {groupnorm}
\bibfield{author}{\bibinfo{person}{Yuxin Wu} {and} \bibinfo{person}{Kaiming He}.} \bibinfo{year}{2018}\natexlab{}.
\newblock \showarticletitle{Group normalization}. In \bibinfo{booktitle}{\emph{Proceedings of the European conference on computer vision (ECCV)}}. \bibinfo{pages}{3--19}.
\newblock


\bibitem[\protect\citeauthoryear{Xu, Xuan, Zhang, and Lu}{Xu et~al\mbox{.}}{2024}]%
        {kl_trust_region1}
\bibfield{author}{\bibinfo{person}{Haotian Xu}, \bibinfo{person}{Junyu Xuan}, \bibinfo{person}{Guangquan Zhang}, {and} \bibinfo{person}{Jie Lu}.} \bibinfo{year}{2024}\natexlab{}.
\newblock \showarticletitle{Trust region policy optimization via entropy regularization for Kullback-Leibler divergence constraint}.
\newblock \bibinfo{journal}{\emph{Neurocomputing}}  \bibinfo{volume}{589} (\bibinfo{year}{2024}), \bibinfo{pages}{127716}.
\newblock
\urldef\tempurl%
\url{https://api.semanticscholar.org/CorpusID:269195555}
\showURL{%
\tempurl}


\bibitem[\protect\citeauthoryear{Yu, Zhang, Chen, and Liu}{Yu et~al\mbox{.}}{2021}]%
        {GEP}
\bibfield{author}{\bibinfo{person}{Da Yu}, \bibinfo{person}{Huishuai Zhang}, \bibinfo{person}{Wei Chen}, {and} \bibinfo{person}{Tie-Yan Liu}.} \bibinfo{year}{2021}\natexlab{}.
\newblock \showarticletitle{Do Not Let Privacy Overbill Utility: Gradient Embedding Perturbation for Private Learning}.
\newblock \bibinfo{journal}{\emph{ArXiv}}  \bibinfo{volume}{abs/2102.12677} (\bibinfo{year}{2021}).
\newblock
\urldef\tempurl%
\url{https://api.semanticscholar.org/CorpusID:232046284}
\showURL{%
\tempurl}


\bibitem[\protect\citeauthoryear{Zhu, Fioretto, and Hentenryck}{Zhu et~al\mbox{.}}{2022}]%
        {post-processing1}
\bibfield{author}{\bibinfo{person}{Keyu Zhu}, \bibinfo{person}{Ferdinando Fioretto}, {and} \bibinfo{person}{Pascal~Van Hentenryck}.} \bibinfo{year}{2022}\natexlab{}.
\newblock \showarticletitle{Post-processing of Differentially Private Data: A Fairness Perspective}.
\newblock \bibinfo{journal}{\emph{ArXiv}}  \bibinfo{volume}{abs/2201.09425} (\bibinfo{year}{2022}).
\newblock
\urldef\tempurl%
\url{https://api.semanticscholar.org/CorpusID:246240328}
\showURL{%
\tempurl}


\bibitem[\protect\citeauthoryear{Zhu, Hentenryck, and Fioretto}{Zhu et~al\mbox{.}}{2020}]%
        {post-processing2}
\bibfield{author}{\bibinfo{person}{Keyu Zhu}, \bibinfo{person}{Pascal~Van Hentenryck}, {and} \bibinfo{person}{Ferdinando Fioretto}.} \bibinfo{year}{2020}\natexlab{}.
\newblock \showarticletitle{Bias and Variance of Post-processing in Differential Privacy}.
\newblock \bibinfo{journal}{\emph{ArXiv}}  \bibinfo{volume}{abs/2010.04327} (\bibinfo{year}{2020}).
\newblock
\urldef\tempurl%
\url{https://api.semanticscholar.org/CorpusID:222272017}
\showURL{%
\tempurl}


\end{thebibliography}

\end{document}